\documentclass{article} 
\usepackage{iclr2026_conference,times}


\usepackage{amsmath,amsfonts,bm}




\def\Figref#1{Figure~\ref{#1}}





\def\eqref#1{equation~\ref{#1}}









\def\1{\bm{1}}










\DeclareMathAlphabet{\mathsfit}{\encodingdefault}{\sfdefault}{m}{sl}
\SetMathAlphabet{\mathsfit}{bold}{\encodingdefault}{\sfdefault}{bx}{n}











\newcommand{\E}{\mathbb{E}}



\DeclareMathOperator*{\argmax}{arg\,max}
\DeclareMathOperator*{\argmin}{arg\,min}

\usepackage{hyperref}
\usepackage{url}
\usepackage{multirow}
\usepackage{tikz}

\usepackage[most]{tcolorbox}

\tcbuselibrary{listings,skins,breakable}

\newtcblisting{trainconfig}{
  colback=gray!5,
  colframe=gray!60,
  listing engine=verbatim,
  listing only,
  breakable,
  left=4pt,right=4pt,top=4pt,bottom=4pt,
  enhanced,
  fontupper=\ttfamily\small,
  title=GRPO Training Configuration,
  fonttitle=\bfseries,
}

\usepackage{microtype}
\usepackage{hyperref}
\usepackage{url}
\usepackage{wrapfig} 

\usepackage{booktabs}
\usepackage{float}

\usepackage{bbm}
\usepackage{dsfont}

\usepackage{graphicx}
\usepackage{wrapfig}
\usepackage{graphicx}
\usepackage{amsmath}
\usepackage{enumitem}
\usepackage[capitalize]{cleveref}

\usepackage{xcolor}
\usepackage{enumitem}
\usepackage{xspace}

\usepackage{lineno}
\usepackage{booktabs}
\usepackage{amsthm} 
\usepackage{booktabs}
\usepackage{subcaption}
\usepackage{amsmath, amssymb}   

\newtheorem{theorem}{Theorem}
\newtheorem{lemma}{Lemma}
\newtheorem{corollary}{Corollary}

\newcommand{\rcorrect}{R_\textrm{correctness}}
\newcommand{\rbrier}{R_\textrm{Brier}}
\newcommand{\rrlcf}{R_\textrm{RLCR}}

\definecolor{darkblue}{rgb}{0, 0, 0.5}
\hypersetup{colorlinks=true, citecolor=darkblue, linkcolor=darkblue, urlcolor=darkblue}

\usepackage{inconsolata}

\title{Beyond Binary Rewards: Training LMs to \\Reason about Their Uncertainty}


\iclrfinalcopy

\author{Mehul Damani\thanks{Equal Contribution.} ~~~
Isha Puri$^{*}$ ~~~
Stewart Slocum ~~~
Idan Shenfeld ~~~
Leshem Choshen \\
\textbf{Yoon Kim }~~~
\textbf{Jacob Andreas} \\
Massachusetts Institute of Technology
\\\texttt{\{mehul42, ishapuri\}@mit.edu}}

%


\newcommand{\rlvf}{RLVR\xspace}
\newcommand{\rlvffull}{reinforcement learning with verifiable rewards}
\newcommand{\rlcf}{RLCR\xspace}
\newcommand{\rlcffull}{reinforcement learning with calibration rewards}

\begin{document}


\maketitle

\begin{abstract}
When language models (LMs) are trained via reinforcement learning (RL) to generate natural language reasoning chains, their performance improves on a variety of difficult question answering tasks. Today, almost all successful applications of RL for reasoning use binary reward functions that evaluate the correctness of LM outputs.
Because such reward functions do not penalize guessing or low-confidence outputs, they often have the unintended side-effect of degrading calibration and increasing the rate at which LMs generate incorrect responses (i.e.\ ``hallucinate$^{\prime\prime}$) in other problem domains.
This paper describes \textbf{RLCR} (Reinforcement Learning with Calibration Rewards), an approach to training reasoning models that jointly improves accuracy and calibrated confidence estimation. During 
\rlcf, LMs generate both predictions and numerical confidence estimates after reasoning. They are trained to optimize a reward function that augments a binary correctness score with a Brier score---a scoring rule for confidence estimates that incentivizes calibrated prediction. 
We first prove that this reward function (or any analogous reward function that uses a bounded, proper scoring rule)
yields models whose predictions are both accurate and well-calibrated.
We next show that across diverse datasets, \rlcf substantially improves calibration while maintaining strong accuracy on both in-domain and out-of-domain evaluations---outperforming both ordinary RL training and classifiers trained to assign post-hoc confidence scores. While ordinary RL hurts calibration, \rlcf improves it. 
Finally, we demonstrate that verbalized confidence can be leveraged at test time to improve accuracy and calibration via confidence-weighted scaling methods.
Our results show that explicitly optimizing for calibration can produce more generally reliable reasoning models. 
Code, models, demonstrations and further information is available at \url{https://rl-calibration.github.io/} 



\end{abstract}

\section{Introduction}


Many recent advances in research on language models (LMs) have been driven by \textit{reasoning models}—LMs trained via reinforcement learning (RL) to ‘think out loud’ in natural language before answering questions, achieving state-of-the-art performance on challenging tasks like math and programming ~\citep{guo2025deepseek}.

%
The standard approach to reasoning training 
(often referred to as \textbf{\rlvffull}, or \textbf{\rlvf})
performs RL with a simple binary correctness reward: 
$\rcorrect(y, y^*) = \mathbbm{1}_{y\equiv y^*}$, where $\equiv$ checks whether the model's output $y$ matches ground-truth answer $y^*$.
While simple and effective for improving accuracy, this reward comes with a critical limitation: it rewards models equally whether they are confidently correct or merely guessing, and penalizes identically whether they abstain or produce incorrect answers. This incentivizes overconfident guessing. 

Consistent with this concern, studies have shown that even when initially well-calibrated, models tend to become overconfident following RL training \citep{choshenweaknesses,leng2024taming}. Reasoning models, in particular, tend to exhibit worsened calibration and increased hallucination rates compared to base models, especially when trained with reward signals that emphasize only correctness \citep{kirichenko2025abstentionbench,yao2025reasoning, o3SystemCard}. This is a critical limitation in high-stakes domains such as healthcare or law, where models must not only be accurate but also communicate uncertainty when appropriate \citep{omar2024overconfident}.

This paper aims to address these limitations by answering two questions:

\begin{itemize}
\item[(1)] Can reasoning models be optimized for both correctness and calibration?

\item[(2)] Can the contents of reasoning chains themselves improve calibration?
\end{itemize}

\begin{figure}[t!]
    \centering
    \includegraphics[width=\textwidth]{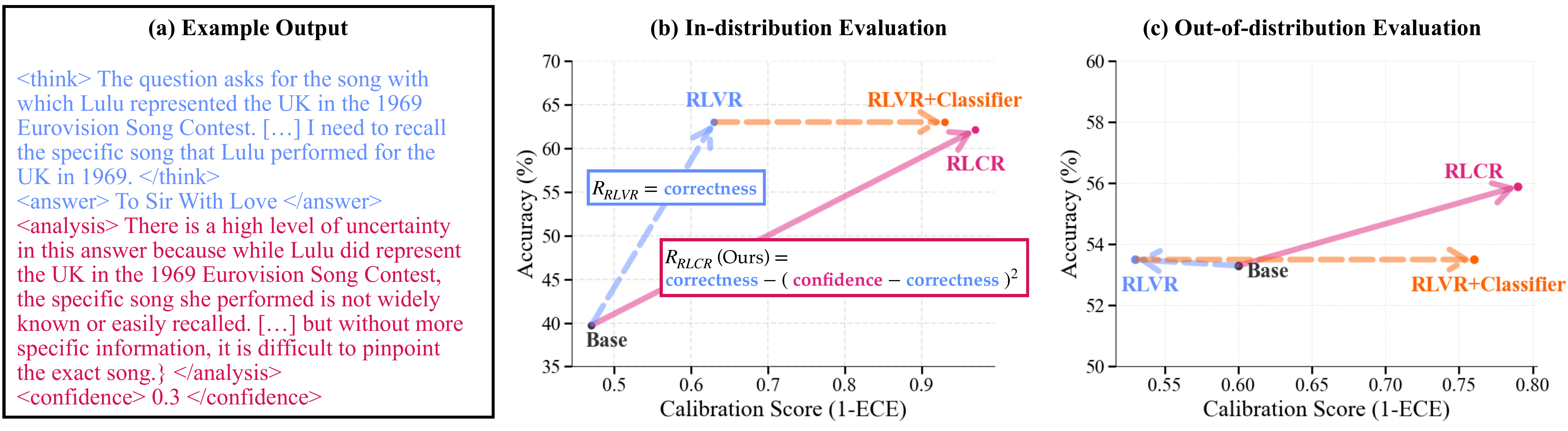}
    \caption{\textbf{(a):} Sample chain-of-thought from a model trained with \rlcf, using \texttt{<think>}, \texttt{<answer>}, \texttt{<analysis>}, and \texttt{<confidence>} tags. 
    \textbf{(b)} On in-domain evaluation tasks, \rlcf improves on standard reasoning training (\rlvf) and even slightly outperforms a combination of \rlvf and a dedicated classifier trained to predict \rlvf correctness. \textbf{(c)} When evaluating generalization to novel tasks, \rlcf improves both accuracy and calibration, while other methods leave accuracy unchanged and sometimes harm calibration. All results shown are for HotpotQA, see \cref{sec:experiments} for more results. \vspace{-1em}}
    \label{fig:formatted-tags-ex}
\end{figure}

We approach these questions through the lens of statistical decision theory, specifically the theory of \textbf{proper scoring rules}. Given a predictor that produces an output $y$ and a confidence $q$, a proper scoring rule
is minimized when $q$ reflects the true probability that $y$ will agree with a ground-truth outcome $y^*$~\citep{gneiting2007strictly}. A canonical example is the \textbf{Brier score}~\citep{brier1950verification}:
$\rbrier(y, q, y^*) = - (q - \mathbbm{1}_{y\equiv y^*})^2$. Proper scoring rules are widely used in forecasting~\citep{waghmare2025proper}, but have seen little application in training LLMs with RL.

Our approach, \textbf{\rlcf (\rlcffull)}, involves a modified version of reasoning training that encourages models to reason about both task correctness and uncertainty. 
To do so, we simply train models to 
output both answers $y$ and (verbalized) confidence scores $q$, 
optimizing a combined reward function:
\begin{equation}
\label{eq:rrlcf}
\begin{split}
    \rrlcf(y, q, y^*) &= \rcorrect(y, y^*) + \rbrier(y, q, y^*) \\ &= \mathbbm{1}_{y\equiv y^*} -(q - \mathbbm{1}_{y \equiv y^*})^2 ~ .
    \end{split}
\end{equation}

We show that this approach has several appealing theoretical and empirical properties:
\begin{itemize}[left=0pt,topsep=0pt, partopsep=1pt, itemsep=1.5pt, parsep=1pt]
    \item \rlcf provably incentivizes both accuracy and calibration: $\rrlcf$ is maximized when models output the answer most likely to be correct, along with a calibrated estimate of their probability of success. In other words, $\rrlcf$ is maximized by LM outputs $(y, q)$ for which $y = \argmax_{y'} p(y'\equiv y^*)$, and $q = p(y \equiv y^*)$, where $p$ denotes the true underlying probability distribution over correctness labels. 
    More generally, we show that an analogous objective can be constructed whenever a \emph{bounded}, proper scoring rule is used for the calibration term. Notably, log-likelihood, though a proper scoring rule, is unbounded and can incentivize models to output incorrect answers (\cref{sec:method}).

     \item In experiments on factual question answering and mathematical reasoning tasks, \rlcf matches the task accuracy of \rlvf while substantially improving calibration, on in-domain problems, reducing expected calibration error from 0.37~$\rightarrow$~0.03 on HotpotQA \citep{yang2018hotpotqadatasetdiverseexplainable} and 0.26~$\rightarrow$~0.10 on a collection of math datasets (\cref{sec:experiments}).

    \item \rlcf improves calibration on out-of-domain tasks: where \rlvf substantially worsens calibration generalization to new domains, \rlcf improves calibration, outperforming the \rlvf model, base model and a predictor equipped with a second model fine-tuned only to output confidence scores.

    \item Verbalized confidence can be incorporated into test-time scaling, improving ensembling and best-of-$N$ methods: This may be attributed to the fact that \rlvf also improves the \emph{coherence} of model predictions across samples: when multiple reasoning chains and predictions are generated for a given question, \rlcf reduces the variance in confidence scores across reasoning chains that lead to the same answer, and reduces the frequency with which models assign high confidence to contradictory answers (\cref{subsec:exp-tts}).
\end{itemize}

\vspace{-2mm}
Together, these results show that existing reasoning training methods can be straightforwardly modified to additionally optimize for calibration, and that this in turn improves their accuracy, robustness, and scalability.

\section{Preliminaries}

Let $\pi_\theta$ be a language model that maps from prompts $x \in X$ to outputs $y \in Y$, perhaps preceded by a natural language reasoning chain, with $x$, $y$ and reasoning chains all represented as strings. Given a dataset of prompt--output pairs $D = \{(x_i, y_i^*)\}$ (e.g.\ questions and ground-truth answers) and a reward function $R : Y \times Y \to \mathbb{R}$ that compares predicted to ground-truth outputs, our goal is to improve LM outputs by optimizing:
\begin{equation}
    \argmax_\theta ~ \E_{(x, y^*) \sim D, ~ y \sim \pi_\theta(\cdot \mid x)} ~ R(y, y^*) ~ .
\end{equation}

\paragraph{\MakeTitlecase{\rlvffull} (\rlvf)} 
When training reasoning models, a standard choice of $R$ is the binary correctness reward:
\begin{equation}
    \rcorrect(y,y^*) = \mathbbm{1}_{y\equiv y^*} ~ ,
\end{equation}
where $\mathbbm{1}_{y\equiv y^*}\in\{0,1\}$ is the indicator function that evaluates whether $y$ is correct, i.e.\ equivalent (perhaps modulo formatting details) to $y^*$.

\paragraph{Proper scoring rules}  
{
Often, we want predictors that output not only an answer $y$, but some scalar measure $q$ of confidence in this answer.%
\footnote{It is sometimes even more useful to train models that can place a complete \emph{distribution} over a large set of possible answers $y$. But for very large answer spaces or expensive predictors---like language models performing chain-of-thought reasoning---enumerating and scoring all possible answers is generally impractical. This paper mainly focuses on models that generate one answer and one confidence score, though see \cref{sec:consistency} for one way of using this approach to generate and score multiple answers.}
A \textbf{scoring rule} measures the quality of a confidence estimate. In the case of modeling binary outcomes (e.g.\ our confidence that a given answer $y$ is correct), a scoring rule is a function $S : \mathbb{R} \times \{0, 1\} \to \mathbb{R}$ that maps a confidence estimate $q$ and an outcome $a$ to a scalar score. A scoring rule is called \textbf{proper} if its expected value is minimized by confidence scores that match the true outcome probability:
\begin{align}
    \E_{a \sim p(a)} ~ S(p(a), a) &\leq \E_{a \sim p(a)} ~ S(q, a) ~~ \forall q ~ .
\end{align}
Here, $p$ denotes the true underlying probability distribution over correctness labels. Perhaps the most familiar example of a proper scoring rule is the log-loss:
\begin{align}
    &\textbf{Logarithmic score:} && S(q, a) = -a \log q - (1-a) \log(1-q) ~ .
    \intertext{But many other examples exist, including}
    &\textbf{Brier score:} && S(q, a) = (a-q)^2 ~ ,\\
    &\textbf{Spherical score:} && S(q, a) = -\frac{qa + (1-q)(1-a)}{\sqrt{q^2 + (1-q)^2}} ~ .
\end{align}
What all these scores have in common is the property that they are minimized when confidences $q$ match the true probability $p(a = 1)$.

}

%
%
%
%
%
%


\section{Method}
\label{sec:method}

The main idea behind our approach is to train language models via reinforcement learning with a reward that incentivizes \emph{both} correctness and calibration, by combining a standard correctness reward with a reward based on the Brier score. In this approach, models are first prompted to produce reasoning chains that produce both answers and confidences estimates (as in \cref{fig:formatted-tags-ex}a). They are then trained to optimize:
\begin{equation}
    \label{eq:rrlcf2}
    \rrlcf(y, q, y^*) = \mathbbm{1}_{y\equiv y^*} - (q - \mathbbm{1}_{y\equiv y^*})^2 ~ .
\end{equation}
 
Intuitively, this reward incentivizes correctness but penalizes models when they output incorrect answers with high confidence or correct answers with low confidence. 


\begin{figure}[h!]
    \centering
    \includegraphics[width=0.9\linewidth]{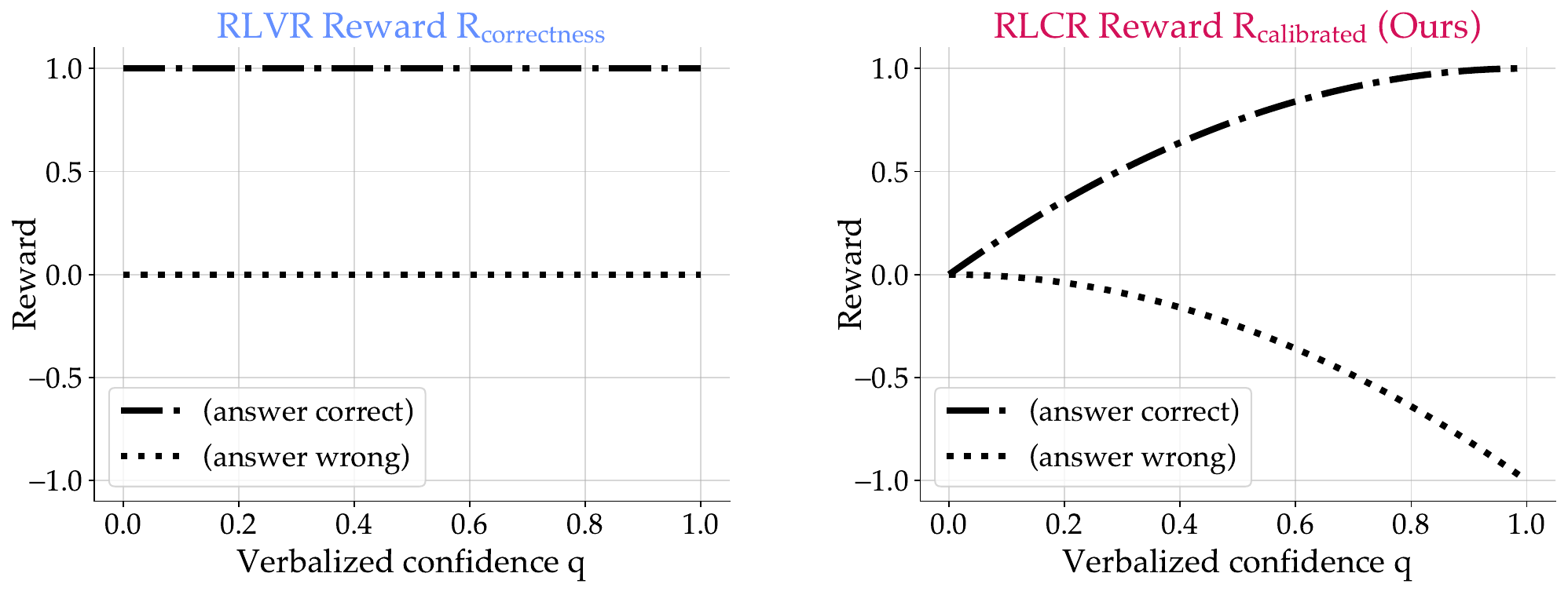}
    \caption{
    \textbf{(a): }\rlvf focuses solely on correctness, which can incentivize guessing.
    \textbf{(b):} \rlcf uses a calibrated reward that jointly optimizes for correctness and calibration.
    }
    \label{fig:reward-plot}
\end{figure}

It is not immediately obvious that this reward function incentivizes desired LM behavior---because it involves a tradeoff between accuracy (the first term) and calibration (the second), we might worry that models will learn to output answers certain to be wrong in order to obtain a small calibration loss. But in fact, the calibration term in \cref{eq:rrlcf2} comes at no cost in accuracy:



\begin{theorem}
\label{thm:main}
%
Suppose, for any prediction $y$, that the 
success indicator $\mathbbm{1}_{y \equiv y^*}$
is distributed as $\textrm{Bernoulli}(p_y)$.
Then $\rrlcf$ in \cref{eq:rrlcf2} satisfies two properties:

\begin{enumerate}
\label{thm:calibration-correctness}
\item \textbf{Calibration incentive.} For \emph{any} $y$, the expected reward $\E_{\mathbbm{1}_{y \equiv y^*}} \rrlcf(y, q, y^*)$ is maximized when $q = p_y$.

\item \textbf{Correctness incentive.} Among all calibrated predictions $(y, p_y)$, expected reward is maximized by the prediction whose success probability $p_y$ is greatest.\footnote{
Note that statement of the problem does not distinguish between epistemic and aleatoric uncertainty about success. Obviously, once $y$ has been predicted, the outcome of evaluation is fully determined, and the objective probability that $y \equiv y^*$ is either 0 or 1. But an information- or computation-constrained predictor may still possess subjective uncertainty.
}


\end{enumerate}
\end{theorem}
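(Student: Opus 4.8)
The plan is to prove both parts by directly computing the expected reward, since once the success indicator is marginalized out, $\rrlcf$ is just a quadratic in the confidence $q$. For \textbf{Part 1 (calibration incentive)}, fix an arbitrary prediction $y$ and write $Z = \mathbbm{1}_{y \equiv y^*} \sim \textrm{Bernoulli}(p_y)$, so that $\E Z = \E Z^2 = p_y$ (the key being $Z^2 = Z$ for a $\{0,1\}$-valued variable). Then
\[
\E_{Z}\,\rrlcf(y,q,y^*) = \E Z - \E (q - Z)^2 = p_y - \bigl(q^2 - 2 q p_y + p_y\bigr) = p_y^2 - (q - p_y)^2 .
\]
This is a strictly concave function of $q$ whose unique maximizer is $q = p_y$, which is exactly the calibration incentive. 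I would remark that the same conclusion holds for any proper scoring rule $S$ in place of the Brier term: since $\rcorrect$ does not depend on $q$, $\argmax_q \E_Z \rrlcf = \argmax_q \E_Z S(q, Z) = p_y$ by properness alone — boundedness of $S$ is not needed here, but will matter in Part 2.

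For \textbf{Part 2 (correctness incentive)}, I would substitute the calibrated choice $q = p_y$ into the display above to get that the best achievable expected reward for prediction $y$ is $\E_Z \rrlcf(y, p_y, y^*) = p_y^2$ (equivalently $p_y - \Var Z = p_y - p_y(1-p_y) = p_y^2$). Since $p \mapsto p^2$ is strictly increasing on $[0,1]$, among all calibrated predictions $(y, p_y)$ the expected reward is maximized precisely by a prediction whose success probability $p_y$ is largest. Chaining the two parts then gives the headline statement: $\rrlcf$ is jointly maximized by outputting $y^\star \in \argmax_y p_y$ together with $q = p_{y^\star}$.

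Mechanically there is essentially no obstacle — both claims collapse to one-line identities. The only point needing genuine care is the concern raised just before the theorem, namely that the calibration term might bias the model toward a deliberately wrong answer in order to shrink its calibration loss. Part 1's identity shows this cannot happen for a \emph{fixed} $y$, and Part 2 shows that the $y$-optimized objective equals $p_y^2$, whose monotonicity in $p_y$ is exactly what forecloses the trade-off. For the promised generalization to an arbitrary bounded proper scoring rule $S$, this monotonicity is the one thing that must be verified rather than asserted: at the calibrated optimum the scoring term contributes the concave generalized entropy $g(p_y) = \E_{Z \sim \textrm{Bernoulli}(p_y)} S(p_y, Z)$, so one needs $p_y + g(p_y)$ to be increasing in $p_y$; when $S$ is bounded this can always be ensured by down-weighting the scoring term so that the unit slope of the correctness term dominates, which is the precise sense in which ``any bounded proper scoring rule'' suffices. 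I would flag that last inequality as the step to write out carefully, everything else being routine algebra.
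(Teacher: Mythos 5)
Your proof is correct and takes a more elementary route than the paper's. You compute the expected reward in closed form for the Brier term, landing on $\E_Z \rrlcf = p_y^2 - (q - p_y)^2$, which makes both parts one-liners: the unique $q$-maximizer is $q = p_y$, and the value at that optimum, $p_y^2$, is visibly increasing in $p_y$. The paper instead works abstractly with a general proper scoring rule $S$ and correctness weight $\lambda$, defines the optimized reward $W(p) = \lambda p - pS(p,1) - (1-p)S(p,0)$, and differentiates it using the Savage--Dawid representation $S'(p,1) = -(1-p)\,\omega(p)$, $S'(p,0) = p\,\omega(p)$ to cancel the derivative terms and reduce nondecreasingness of $W$ to the condition $S(p,1)-S(p,0)\le\lambda$. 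For the specific Brier objective your direct computation is cleaner and in fact reproduces the paper's $W(p)=p^2$; what the paper's machinery buys is exactly the generalization you flagged at the end: it identifies the precise boundedness condition $S(p,1)-S(p,0)\le\lambda$ (this is what ``bounded'' means in the corollary, not boundedness of $S$ itself), and the Savage--Dawid cancellation is what turns the monotonicity of $\lambda p - g(p)$, with $g$ the generalized entropy $g(p)=\E_{Z\sim\mathrm{Bernoulli}(p)}S(p,Z)$, into that transparent inequality without having to differentiate $g$ case-by-case. One small sign slip in your generalization remark: since the scoring rule is \emph{subtracted} from the reward, the quantity whose monotonicity matters is $\lambda p_y - g(p_y)$, not $p_y + g(p_y)$ (for the Brier score, $g(p)=p(1-p)$ and $p - g(p) = p^2$, consistent with your computation); this does not affect the correctness of your main proof, only the phrasing of the extension.
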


Proof is given in \cref{app:scoring-rules-proofs}.

An important property of \cref{thm:main} is that we cannot replace the Brier term $(q - \mathbbm{1}_{y\equiv y^*})^2$ with \emph{any} proper scoring rule---for example the log loss $\mathbbm{1}_{y\equiv y^*} \log q + (1 - \mathbbm{1}_{y \equiv y^*}) \log (1-q)$ does not incentivize correctness. However
an analogous version of \cref{thm:main} exists for any \emph{bounded} proper scoring rule satisfying $S(p, 1) - S(p, 0) < \lambda$ for some $\lambda$.

\section{Experiments}
\label{sec:experiments}

Our main experiments aim to evaluate how \rlcf empirically changes the accuracy and calibration of LMs, both in ``in-domain$^{\prime\prime}$ evaluations on the task used for RL, and ``out-of-domain$^{\prime\prime}$ evaluations on other question-answering tasks.
Additional experiments evaluate interactions between \rlcf and other test-time reasoning paradigms, and the extent to which \rlcf causes LM predictions to become more coherent \emph{across} predictions.

\subsection{Experimental Setup}



\paragraph{Training Details}

We use Group Relative Policy Optimization (GRPO) \citep{shao2024deepseekmath} as the base RL algorithm with some modifications (see \cref{appendix:training_details}).
We use the Qwen2.5-7B base model, part of the Qwen family popularly used in RL tasks~\citep{hu2025open,gandhi2025cognitive}.
Following recent work on RL training for LM reasoning~\citep{hu2025open,guo2025deepseek}, we initialize RL from the base model and do not use any KL regularization. 




\paragraph{Methods}
We evaluate the following methods:
\begin{enumerate}[left=0pt,topsep=1pt, partopsep=1pt, itemsep=2pt, parsep=1pt]
    \item \textbf{Base:} The base pre-trained model. We use \emph{Qwen2.5-7B Base} in our experiments. 
    We prompt the model to output both answers and confidences, detailed in ~\cref{app:eval_details}.
    \item \textbf{\rlvf:} Initialized from the base model and trained using $R_\textrm{correctness}$ with \texttt{<think>} and \texttt{<answer>} tags. During evaluation, the model is also prompted to output a verbalized confidence.
    \item \textbf{\rlvf + BCE Classifier:} A confidence classifier trained on outputs from the \rlvf model. Specifically, given problems, solution CoTs (from \rlvf), and correctness labels \((x, y, \mathbbm{1}_{y\equiv y^*})\), we train a confidence classifier \(f_\theta(x, y)\) using the binary cross-entropy (BCE) loss:
    \begin{equation}
    \label{eq:classifier}
    \mathcal{L}_{\text{BCE}}(\theta) = - \, \mathbb{E}_{(x, y, \mathbbm{1}_{y\equiv y^*})} \left[ \mathbbm{1}_{y\equiv y^*} \log f_\theta(x, y) + (1-\mathbbm{1}_{y\equiv y^*}) \log (1 - f_\theta(x, y)) \right]
    \end{equation}
    The classifier is initialized from \emph{Qwen2.5-7B Base} and is thus highly expressive. This approach is expensive, as it requires training and inference with two large models.

   \item \textbf{\rlvf + Brier Classifier:} Instead of using binary cross-entropy (BCE) loss to train a classifier, we use mean squared error (MSE), which allows more direct optimization of the Brier score:
     \begin{equation}
    \mathcal{L}_{\text{Brier}}(\theta) = \mathbb{E}_{(x, y, \mathbbm{1}_{y\equiv y^*})} \left[ \left( f_\theta(x, y) - \mathbbm{1}_{y = y^*} \right)^2 \right]
    \end{equation}

    \item \textbf{\rlvf + Probe:} 
    Given final-layer embedding \(\phi(x, y)\) of the \rlvf model, we train a linear probe to predict confidence. In \cref{eq:classifier}, we replace the fine-tuned LM with a linear model: $f_\theta(x, y) = \log \sigma(\theta^\top \phi(x, y))$, where \(\sigma(\cdot)\) denotes the sigmoid function. 
    



    \item \textbf{Answer Probability:} We generate outputs using \rlvf, extract tokens enclosed within \texttt{<answer>} tags, and compute their average probability: $\text{AnswerProb}(y) = \frac{1}{|\mathcal{A}|} \sum_{t \in \mathcal{A}} P_\theta(y_t \mid y_{<t}, x)$

Here, the set \( \mathcal{A} \subseteq \{1, \dots, T\} \) denotes the token positions that appear between the \texttt{<answer>} tags. \( P_\theta(y_t \mid y_{<t}, x) \) represents the model's probability of generating token \( y_t \).

    \item \textbf{\rlcf (ours):} Initialized from base model and trained using $\rrlcf$.

\end{enumerate}

\textbf{Evaluation Metrics}
We use the following evaluation metrics:

\begin{enumerate}
    \item \textbf{Accuracy ($\uparrow$):} A measure of performance.
     \item \textbf{Area under ROC curve (AUROC) ($\uparrow$):} Measures how well confidence scores distinguish correct from incorrect answers, treating correctness as a binary label and averaging TPR/FPR (true and false positive rates) over all thresholds.
     $\text{AUROC} = \int_0^1 \text{TPR}(\text{FPR}^{-1}(t)) \, dt$
    
    \item \textbf{Brier Score ($\downarrow$):} Squared difference between confidence and ground truth. \newline Brier Score = $\frac{1}{N} \sum_{i=1}^{N} (q_i - \mathbbm{1}_{y_i\equiv y^*_i})^2$

    
     \item \textbf{Expected Calibration Error (ECE) ($\downarrow$):} Calibration metric that groups confidences into bins and computes difference between the average correctness and confidence. $\text{ECE} = \sum_{m=1}^{M} \frac{|B_m|}{N} \left| \text{acc}(B_m) - \text{conf}(B_m) \right|$, where $M$ is the number of bins, $B_m$ is the set of samples in bin $m$, and  $N$ is the number of samples. We use $M=10$. 


\end{enumerate}

\textbf{Evaluation datasets}
We evaluate \rlcf on benchmarks highlighting distinct sources of uncertainty, including ambiguous evidence, obscure facts, and complex reasoning.
HotPotQA~\citep{yang2018hotpotqadatasetdiverseexplainable} tests calibration under incomplete or distracting evidence, while SimpleQA~\citep{wei2024measuring} and TriviaQA~\citep{joshi2017triviaqa} probe overconfidence on obscure factual knowledge; 
GPQA~\citep{rein2024gpqa}, Math500~\citep{hendrycks2021measuring}, GSM8K~\citep{cobbe2021training}, and Big-Math~\citep{albalak2025big} assess calibration in complex, multi-step or scientific reasoning, where uncertainty accumulates across steps. CommonsenseQA~\citep{talmor2018commonsenseqa} examines confidence in ambiguous, implicit reasoning scenarios. 

\subsection{HotPotQA}


\paragraph{Dataset}
We use a modified HotPotQA distractor dataset \citep{yang2018hotpotqadatasetdiverseexplainable} with multi-hop questions and 10 paragraphs (2 relevant, 8 distractors). To test uncertainty reasoning, \emph{HotPotQA-Modified} removes 0, 1, or both relevant paragraphs, creating varying information completeness. See \cref{app:hotpot-eg} for an example. The dataset is evenly split across these conditions, with 8 paragraphs per example. We train on 20,000 examples and use exact string match to compute correctness.

\paragraph{Results}
On the in-distribution HotpotQA distractor dataset (\cref{tab:stacked_tables_joint_caption}), RL-trained models outperform off-the-shelf baselines in multi-hop accuracy, with \rlcf being comparable to \rlvf in accuracy, showing that the calibration term does not hurt performance. While both the base and \rlvf remain overconfident and poorly calibrated, our method and the classifiers achieve substantially better calibration, with \rlcf slightly ahead. The \emph{Answer Probability} baseline performs poorly, as the model typically commits to an answer during CoT reasoning, inflating output confidence. 
\cref{fig:train-results} shows that both correctness and calibration reward for \rlcf increase smoothly during training. 

Next, we evaluate experiments across six out-of-distribution datasets (TriviaQA, SimpleQA, MATH500, GSM8K, CommonsenseQA, GPQA). We find that RL training on HotpotQA does not improve accuracy OOD, with the base model performing comparably to RL-trained models. However, \rlvf worsens calibration relative to the base model, whereas  \rlcf achieves substantial gains over all baselines across calibration metrics while maintaining (or slightly improving) on task accuracy. We hypothesize that better calibration generalization of \rlcf could be due to:
\begin{enumerate}[left=0pt,topsep=0pt, partopsep=1pt, itemsep=1.5pt, parsep=1pt]
    \item \textbf{Chain-of-thought reasoning about uncertainty} Reasoning about uncertainty can improve calibration by allowing reflection on confidence, in line with recent work~\citep{yoon2025reasoning}.
    \item \textbf{Training dynamics of RL} During RL training, the model's confidence analysis and scores have to constantly adapt to the model's improving task performance. This non-stationarity might lead to more robust learning and better generalization. 
    \item \textbf{Shared representations} Using a single model for both solution generation and calibration allows the calibration task to leverage internal representations used by the solution generating process.
\end{enumerate}

\newcommand{\accpipe}[1]{%
\begin{tikzpicture}[baseline=(t.base)]
  \draw (-0.35,0) -- (0.35,0);
  \draw (-0.35,1.8) -- (0.35,1.8);
  \draw (0,0) -- (0,0.55);
  \draw (0,1.25) -- (0,1.8);
  \fill[white] (-0.28,0.55) rectangle (0.28,1.25);
  \node (t) at (0,0.9) {#1};
\end{tikzpicture}%
}

\begin{table}

    \centering
    \renewcommand{\arraystretch}{1.2}
    \footnotesize

    \begin{tabular}{l|cccc|cccc}
        \multicolumn{9}{c}{\textbf{(a) Models Trained on HotpotQA}} \\
        \toprule
        \textbf{Method} 
        & \multicolumn{4}{c|}{\textbf{HotpotQA}} 
        & \multicolumn{4}{c}{\textbf{O.O.D. Averaged}} \\
        \cmidrule(lr){2-5} \cmidrule(lr){6-9}
        & \textbf{Acc.} & \textbf{AUROC} & \textbf{Brier} & \textbf{ECE} 
        & \textbf{Acc.} & \textbf{AUROC} & \textbf{Brier} & \textbf{ECE} \\
        & ($\uparrow$) & ($\uparrow$) & ($\downarrow$) & ($\downarrow$)
        & ($\uparrow$) & ($\uparrow$) & ($\downarrow$) & ($\downarrow$) \\
        \midrule
        Base & 39.7\% & 0.54 & 0.53 & 0.53
             & 53.3\% & 0.54 & 0.41 & 0.40 \\
        RLVR & \multirow{5}{*}{\accpipe{\textbf{63.0}\%}} & 0.50 & 0.37 & 0.37
     & \multirow{5}{*}{\accpipe{53.9\%}} & 0.50 & 0.46 & 0.46 \\
        RLVR + BCE Classifier &  & 0.66 & 0.22 & 0.07
                              &  & 0.58 & 0.27 & 0.24 \\
        RLVR + Brier Classifier &  & 0.65 & 0.22 & 0.09
                     &  & 0.60 & 0.32 & 0.33 \\
        RLVR + Probe &  & 0.55 & 0.24 & 0.10
                     &  & 0.53 & 0.38 & 0.38 \\
        Answer Probability &  & \textbf{0.72} & 0.36 & 0.36
                    &  & 0.60 & 0.42 & 0.42 \\
        RLCR (\textbf{ours}) & 62.1\% & 0.69 & \textbf{0.21} & \textbf{0.03}
                             & \textbf{56.2}\% & \textbf{0.68} & \textbf{0.21} & \textbf{0.21} \\
        \bottomrule
    \end{tabular}

    \vspace{1.5em}

    \begin{tabular}{l|cccc|cccc}
    \multicolumn{9}{c}{\textbf{(b) Models Trained on Big-Math}} \\
    \toprule
    \textbf{Method} 
    & \multicolumn{4}{c|}{\textbf{Math}} 
    & \multicolumn{4}{c}{\textbf{O.O.D. Averaged}} \\
    \cmidrule(lr){2-5} \cmidrule(lr){6-9}
    & \textbf{Acc.} & \textbf{AUROC} & \textbf{Brier} & \textbf{ECE} 
    & \textbf{Acc.} & \textbf{AUROC} & \textbf{Brier} & \textbf{ECE} \\
    & ($\uparrow$) & ($\uparrow$) & ($\downarrow$) & ($\downarrow$)
    & ($\uparrow$) & ($\uparrow$) & ($\downarrow$) & ($\downarrow$) \\
    \midrule
    Base & 56.1\% & 0.56 & 0.40 & 0.39
         & 47.8\% & 0.53 & 0.46 & 0.45 \\
    \rlvf & \multirow{5}{*}{\accpipe{\textbf{72.9\%}}} & 0.47 & 0.28 & 0.26
         & \multirow{5}{*}{\accpipe{\textbf{52.5\%}}} & 0.52 & 0.49 & 0.49 \\
    \rlvf+ BCE Classifier &  & \textbf{0.78} & 0.15 & 0.10
                   &  & 0.55 & 0.34 & 0.33 \\
    \rlvf+ Brier Classifier &  & \textbf{0.78} & 0.15 & 0.10
                &  & 0.57 & 0.28 & 0.27 \\
    \rlvf+ Probe &  & 0.65 & 0.19 & 0.13
                &  & 0.53 & 0.33 & 0.30 \\
    Answer Probability &  & 0.52 & 0.26 & 0.26
                &  & 0.52 & 0.44 & 0.43 \\
    \rlcf (\textbf{ours}) & 72.7\% & 0.67 & 0.17 & 0.10
                          & 50.9\% & 0.60 & 0.28 & 0.25 \\
    SFT+\rlcf (\textbf{ours}) & 72.2\% & \textbf{0.78} & \textbf{0.14} & \textbf{0.08}
                              & 43.8\% & \textbf{0.66} & \textbf{0.24} & \textbf{0.18} \\
    \bottomrule
\end{tabular}

    \caption{\textbf{Accuracy and calibration metrics for models trained on HotpotQA and Big-Math. Best values bolded. Dataset-specific results in \cref{app:each-dataset-results}. }\newline
    \textbf{(a) Performance on HotpotQA and $6$ out-of-distribution (O.O.D.) datasets.} \rlcf achieves competitive accuracy and significantly outperforms all baselines in calibration, especially on O.O.D. datasets, demonstrating the benefits of jointly optimizing accuracy and calibration. \newline 
    \textbf{(b) Performance on Math and $5$ out-of-distribution (O.O.D.) datasets.} Math results are averaged over $3$ datasets: Math-500, GSM8K and Big-Math. SFT+\rlcf variant achieves the best calibration across both in-distribution and O.O.D. settings. However, this comes at the cost of reduced generalization accuracy, possibly due to catastrophic forgetting. \rlcf offers a stronger trade-off in O.O.D. settings, maintaining competitive accuracy while improving calibration. }
\vspace{-8.75mm}

\label{tab:stacked_tables_joint_caption}
\end{table}

\subsection{Math}

\paragraph{Training Details}
We use a subset of the Big-Math~\citep{albalak2025big} dataset, containing 15,000 problems selected using criteria defined in \cref{app:dataset_details}.
We compute correctness using \emph{math-verify}.\footnote{\url{https://github.com/huggingface/Math-Verify}}
To enhance the quality of uncertainty reasoning, we also train a variant with a lightweight SFT warmup phase.
We generate solutions from the base model on 500 examples and use Deepseek-R1 to produce uncertainty analyses for them.
Further details in \cref{appendix:training_details}.



\paragraph{Results}
On Math benchmarks (averaged over GSM8K, Math, and Big-Math), all RL methods improve accuracy significantly over the base model.
SFT+\rlcf achieves the best calibration, slightly surpassing the classifiers, while base and \rlvf remain poorly calibrated. 
Out-of-distribution (TriviaQA, SimpleQA, CommonsenseQA, GPQA, HotPotQA), the accuracies of \rlcf and \rlvf are marginally better than the base model, but surprisingly the accuracy of the SFT+\rlcf model drops significantly, possibly due to catastrophic forgetting induced by SFT warmup.
Despite this, SFT+\rlcf achieves the strongest calibration.
Overall, \rlcf offers a stronger trade-off in O.O.D.\ settings, maintaining accuracy while matching or outperforming all baselines on calibration.



\begin{figure}[t!]
\centering
\includegraphics[width=0.85\textwidth]{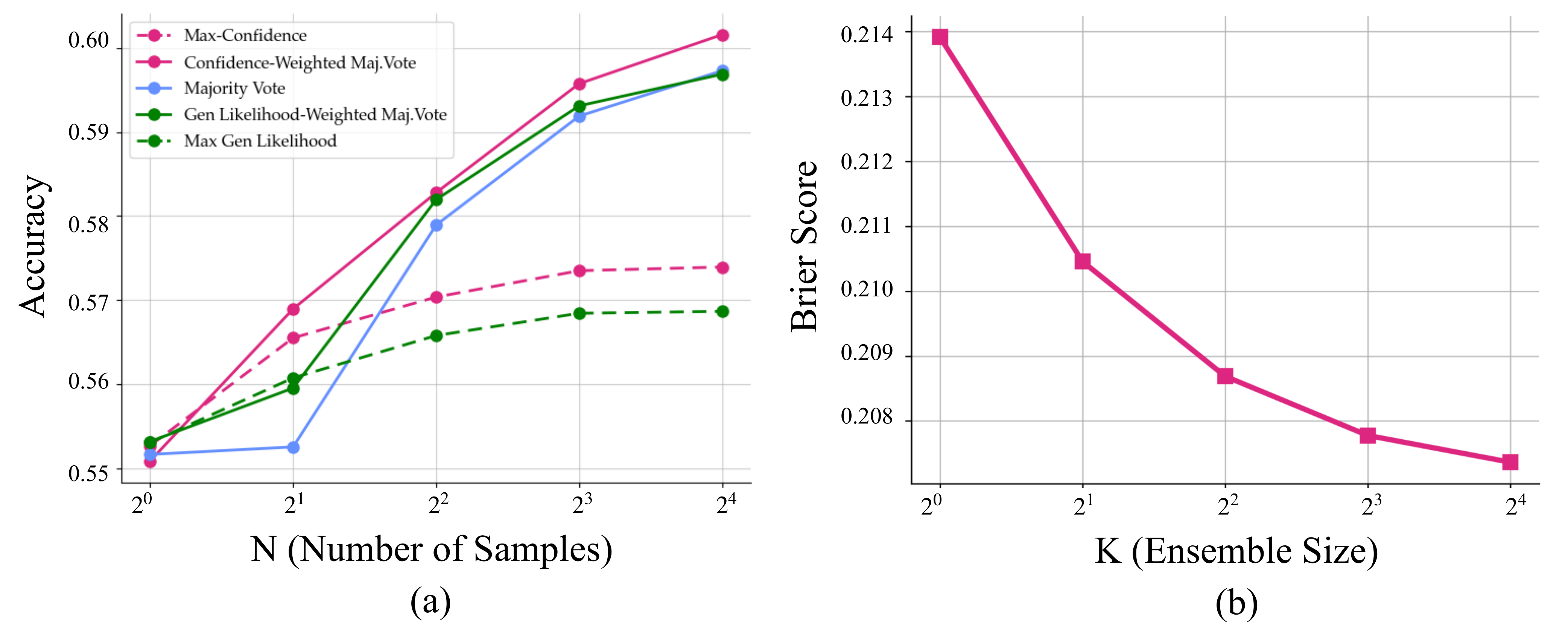}
\caption{ \textbf{Test-time scaling curves.} \textbf{(a)} \textbf{Accuracy vs Number of Samples (N)}. Accuracy improves for all methods with increasing compute. \emph{Confidence-weighted majority vote} outperforms both vanilla \emph{majority vote} and \emph{max-confidence}, highlighting complementary benefits of combining voting with confidence scores. \textbf{(b) Brier Scores vs Ensemble Size (K).} Here we evalute the effect of applying test-time scaling to \emph{confidence estimation alone}, resampling multiple analyses (blue text in \cref{fig:formatted-tags-ex}). Calibration improves (although modestly) as the size of the analysis ensemble grows.}
\label{fig:inf-scaling-results}
\end{figure}

\subsection{Can verbalized confidences be used for test-time scaling?}
\label{subsec:exp-tts}
We next evaluate whether confidence scores from RLCR can be incorporated into test-time scaling algorithms to yield improvements in both accuracy and calibration.

\paragraph{Accuracy}
Test-time scaling methods such as best-of-N or majority vote are widely used to boost model performance by aggregating multiple responses.
Given $N$ responses $y_1,y_2,...y_N$ and a reward model $r(x,y)$, \textbf{best-of-N} selects the response with highest reward: $y_\text{chosen} = \argmax_{\{y_1, \ldots, y_N\} \sim p(\cdot \mid x)} R(x, y_i) ~ .$
Similarly, \textbf{majority vote} selects the most frequent answer among $N$ responses.
Prior work has also explored \textbf{weighted majority vote}, where the voting is weighted using reward model scores.

\textbf{Our key insight is that the verbalized confidence $q$ output by the model can serve as an effective proxy reward}~\citep{taubenfeld2025confidence}.
If a model is well-calibrated, then selecting responses with higher confidence will naturally lead to an increase in performance. 
This insight leads to two simple algorithms: 
select the response with the highest verbalized confidence from the set of $N$ candidates (\textbf{max-confidence}), or weight each vote by its confidence score (\textbf{confidence-weighted majority vote}). 
These algorithms are the confidence-scored analogues to Best-of-N and weighted majority vote.
Importantly, both approaches do not require any additional supervision or external reward models.

To contextualize these methods against likelihood-based selection, which is always computable for any open model, we additionally compare against two likelihood-based baselines - \textbf{max generation likelihood}, which performs Best-of-N using average sentence likelihood $R=p(y|x)/|y|$ as the reward score, and \textbf{generation likelihood-weighted majority vote}, which weights votes by these likelihood scores. 
We evaluate these approaches using the \rlcf model (trained on Hotpot) for generation, plotting average accuracy across the $7$ datasets used in \cref{tab:stacked_tables_joint_caption}.
As shown in \cref{fig:inf-scaling-results}a, accuracy consistently improves with more samples, with confidence weighted majority vote outperforming vanilla majority voting, max-confidence, and both likelihood baselines. These gains highlight how calibration can directly underpin test time  scaling - better calibrated confidence estimates lead to more accurate aggregated predictions.



\paragraph{Calibration} 
In our structured \rlcf CoT, models first output a solution, followed by an analysis and confidence score.
To improve confidence scores for a fixed answer $y$, we sample $K$ analysis CoTs $z_1, \dots, z_K \sim p(\cdot \mid x, y)$, each producing a verbalized confidence $q_i$.
We then ensemble these confidences to obtain the aggregated confidence estimate $\bar{q} = \frac{1}{K} \sum_i q_i$.
\Figref{fig:inf-scaling-results}b plots Brier score (averaged over $7$ datasets) as a function of ensemble size $K$.
We observe that calibration improves as the ensemble size grows, though the absolute gains are relatively modest.
This reflects the fact that for most questions, there is low \emph{``uncertainty about uncertainty''}, so averaging does not substantially alter the estimate.
Nonetheless, ensembling provides a lightweight mechanism for reducing residual noise, especially on harder questions where confidence variability is greater.

\subsection{Are verbalized confidences self-consistent?} \label{sec:consistency}
\textbf{Intra-solution coherence}  
A desirable property of uncertainty-aware reasoning is that a model should assign consistent confidence estimates when generating multiple uncertainty reasoning chains for the same answer.  
That is, the model should have low \emph{``uncertainty about uncertainty''}.
Given a fixed answer $y$, we sample $K$ analysis CoTs $z_1, \dots, z_K \sim p(\cdot \mid x, y)$, where each chain produces a verbalized confidence score $q_i$.  
Ideally, these confidence scores should exhibit low variability.

\Cref{fig:self-consistency}a plots the standard deviation across seven datasets, using analysis CoTs generated by the \rlcf model trained on HotpotQA. Most samples have low standard deviation, suggesting that the model’s confidence estimates are generally consistent.

\begin{figure}[t!]
\centering
\vspace{-4mm}
\includegraphics[width=0.9\textwidth]{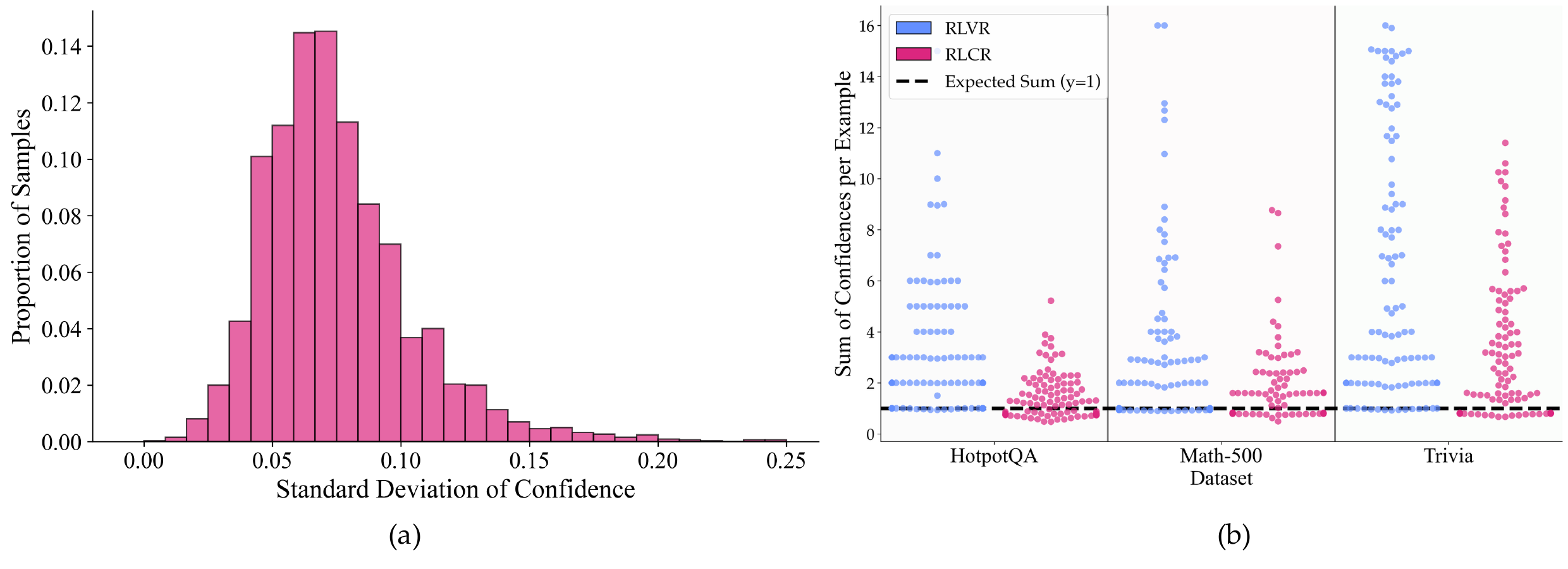}
\caption{\textbf{(a): Distribution of standard deviation in confidence across multiple uncertainty reasoning chains for the same solution/answer.} Most samples exhibit low deviation, indicating that the model’s confidence estimates are self-consistent. (b) \textbf{Swarm plot of confidence sums across $3$ datasets}. \rlcf consistently remains closer to the ideal sum of 1. Nonetheless, overconfidence remains, suggesting room for further improvement.}
\label{fig:self-consistency}
\end{figure}

\textbf{Inter-solution consistency}  
For tasks where answers are \emph{mutually exclusive}---i.e., only one answer is correct per instance---it is desirable that the model distributes its confidence across distinct answers such that the total confidence is less than or equal to 1, with equality holding when the set of answers is exhaustive.  
Let a model generate $N$ responses $\{y_i\}_{i=1}^N$ with associated confidence scores $\{q_i\}_{i=1}^N$, where $q_i \in [0,1]$. Let $\mathcal{A} = \{a_1, \dots, a_K\}$ denote the set of $K \leq N$ unique answers among the $y_i$. The mean confidence assigned to answer $a_k$ is:
\begin{equation}
    \bar{q}_k = \frac{\sum_i \mathbbm{1}_{y_i \equiv a_k} \cdot q_i}{\sum_i \mathbbm{1}_{y_i \equiv a_k}}
\end{equation}

For a model to give consistent answers, we desire: $\sum_{k=1}^{K} \bar{q}_k \leq 1$.

\cref{fig:self-consistency}b shows a swarm plot of predicted confidence sums across three representative datasets.
On the in-distribution HotpotQA dataset, \rlcf's confidence sums cluster tightly around $1$, indicating well-calibrated belief distribution. For out-of-distribution datasets, both \rlcf and \rlvf exhibit overconfidence, with sums exceeding $1$, though \rlcf remains significantly closer to the ideal. 
This reflects \rlcf’s improved calibration, yet highlights room for improvement, particularly OOD.


\begin{table}

    \centering
    \renewcommand{\arraystretch}{1.2}
    \footnotesize

    \begin{tabular}{l|cccc|cccc}
        \multicolumn{9}{c}{\textbf{Models Trained on HotpotQA}} \\
        \toprule
        \textbf{Method} 
        & \multicolumn{4}{c|}{\textbf{HotpotQA}} 
        & \multicolumn{4}{c}{\textbf{O.O.D. Averaged}} \\
        \cmidrule(lr){2-5} \cmidrule(lr){6-9}
        & \textbf{Acc.} & \textbf{Tokens} & \textbf{Brier} & \textbf{ECE} 
        & \textbf{Acc.} & \textbf{Tokens} & \textbf{Brier} & \textbf{ECE} \\
        & ($\uparrow$) & ($\downarrow$) & ($\downarrow$) & ($\downarrow$)
        & ($\uparrow$) & ($\downarrow$) & ($\downarrow$) & ($\downarrow$) \\
        \midrule
        RLCR (\textbf{ours}) & 62.1\% & 249 & \textbf{0.21} & \textbf{0.03}
                             & 56.2\% & 300 & \textbf{0.21} & \textbf{0.21} \\
        RLCR w/o Analysis & 61.7\% & 113 & 0.23 & 0.09
                             & \textbf{56.5}\% & 179 & 0.26 & 0.26 \\
    RLVR w/ Analysis & 62.3\% & 224 & 0.35 & 0.34
     & 52.6\% & 221 & 0.41 & 0.39 \\
        RLVR & \textbf{63.0}\% & \textbf{92} & 0.37 & 0.37
     & 53.9\% & \textbf{142} & 0.46 & 0.46 \\

        \bottomrule
    \end{tabular}

    \vspace{1.5em}

    \caption{\textbf{Calibration ablation evaluating the contributions of (i) calibration-aware reward learning and (ii) explicit uncertainty reasoning}. Adding uncertainty analysis improves calibration for both RLCR and RLVR, while removing analysis from RLCR still preserves significant calibration benefits. Together, these results indicate that both components independently contribute to improved calibration, and the strongest performance is achieved when both are used together. }

\label{tab:analysis_ablation}
\end{table}

\subsection{What Drives Calibration Gains?}
\label{sec:consistency}

RLCR differs from RLVR along two key axes: (1) RLCR's reward function directly incentivizes calibrated confidence estimates, and (2) RLCR models are prompted to explicitly reason about uncertainty. To isolate the contribution of these components, we ablate the uncertainty-reasoning portion of the CoT prompt and evaluate the following four variants on the HotpotQA-trained models:

\begin{enumerate}[left=0pt,topsep=1pt,partopsep=1pt,itemsep=2pt,parsep=1pt]
    \item \textbf{RLCR:} Vanilla RLCR, as presented in \cref{tab:stacked_tables_joint_caption}. 
    \item \textbf{RLCR w/o Analysis:} The RLCR model evaluated \emph{without} uncertainty reasoning. At inference time, the model is instructed to output only \texttt{<think>}, \texttt{<answer>}, and \texttt{<confidence>}.The model is  instructed to not perform any uncertainty reasoning.
     \item \textbf{RLVR:} Vanilla RLVR, as presented in \cref{tab:stacked_tables_joint_caption}.
    \item \textbf{RLVR w/ Analysis:} The RLVR model evaluated \emph{with} uncertainty reasoning, using the identical analysis prompt used for training/evaluating RLCR (see long RLCR prompt in \cref{app:systemPrompts}).
\end{enumerate}

\noindent The results in \cref{tab:analysis_ablation} reveal several key findings.

\paragraph{(1) Explicit uncertainty reasoning improves calibration for \emph{both} RLCR and RLVR.}
Across both training and OOD settings, the variants with uncertainty reasoning achieve lower Brier and ECE than their no-analysis counterparts. This is consistent with prior work~\citep{yoon2025reasoning} and is further corroborated by the classifier ablations in \cref{app:reasoning_classifiers}.

\paragraph{(2) Reward-based calibration is substantially more effective than prompting alone.}
Although RLVR w/ Analysis improves over vanilla RLVR, it remains far behind both RLCR variants, including RLCR w/o Analysis. Thus, prompting a model to reason about uncertainty provides only modest gains compared to explicitly training with a calibrated reward.


\paragraph{(3) RLCR w/o Analysis nearly matches RLVR in accuracy and token cost, while dramatically improving calibration.}
On HotpotQA, RLCR w/o Analysis uses a similar number of tokens as RLVR (113 vs.\ 92) and achieves comparable accuracy (61.7\% vs.\ 63.0\%), yet its calibration is far superior (ECE: $0.09$ vs.\ $0.37$).

\paragraph{Overall.}
Both components—explicit analysis and calibration-aware reward learning—contribute to improved calibration. When efficiency is paramount, RLCR w/o Analysis offers a drop-in alternative that preserves accuracy and token efficiency while significantly improving calibration over RLVR.

\section{Related Work}

Building reliable reasoning models requires not only high accuracy but also calibrated uncertainty—a property that standard RL objectives often erode~\citep{achiam2023gpt}, leaving models over-confident~\citep{xiong2023can} and prone to hallucination~\citep{jaech2024openai}. 
We survey four strands of prior work on confidence estimation in LLMs: 

\textbf{Post-hoc verbalizations} prompt models to state their confidence after answering \citep{xiong2023can, yang2024verbalizedconfidencescoresllms, tanneru2024quantifying,lin2022teaching}. 
\cite{lin2022teaching} fine-tune GPT-3 to predict confidence given a question and answer, using empirical accuracy as the target label.
\cite{xiong2023can} find that LMs exhibit overconfidence when verbalizing their confidence.
\cite{tian2023just} finds that RLHF models’ verbalized confidence scores are better calibrated than their conditional probabilities.
\cite{mei2025reasoning} find that even reasoning models also suffer from overconfidence and can become even more overconfident with deeper reasoning. 
Similarly, \cite{kirichenko2025abstentionbench} introduce AbstentionBench and show that LMs struggle to abstain appropriately, with reasoning fine-tuning often degrading abstention performance.
Positively, \cite{yoon2025reasoning} and \cite{mei2025reasoning} find that reasoning models can improve calibration by introspection and slow thinking.

\textbf{Sampling-based methods} use response agreement (e.g., majority vote or best-of-$N$) as a proxy for confidence, but are costly and require clear ground truth \citep{kang2025scalable}. \cite{aichberger2024semantically} estimate uncertainty by generating diverse, plausible responses and measuring their consistency. \cite{kuhn2023semanticuncertaintylinguisticinvariances} propose semantic entropy, a sampling-based method that leverages linguistic invariances to better estimate uncertainty in natural language generation.

\textbf{Internal probing} extracts confidence from model features like token probabilities \citep{gupta2024language}, offering fine-grained scores but lacking generality. 
\cite{kadavath2022language} prompt language models to output “true” or “false” and use their probability as a proxy for the model’s confidence.
\cite{mielke2022reducing} train a LM to to generate responses conditioned on confidence estimates provided by an external probe.
\cite{fadeeva2024factcheckingoutputlargelanguage} propose a token-level uncertainty method that leverages internal model signals to fact-check claims and detect hallucinations. \cite{azaria2023internalstatellmknows} train a classifier on hidden layer activations to detect the truthfulness of statements based on the LLM’s internal state. \cite{orgad2025llmsknowshowintrinsic} show that internal representations encode rich, token-level truthfulness signals, which can detect and categorize errors beyond what is reflected in the output.

\textbf{RL-based methods} train models to output calibrated verbal confidences with RL. 
\cite{stengel2024lacie} introduce a speaker-listener framework, where the speaker's rewards are determined by the listener’s inferred confidence in the speaker’s responses. 
\cite{leng2024taming} introduce verbalized confidence scores in reward model training.
Most closely related to our work, \cite{xu2024sayself} and \cite{stangel2025rewarding} train LMs with RL using proper scoring rules as reward functions.
 \cite{xu2024sayself} adopt the Brier score, while \cite{stangel2025rewarding} use a clipped version of the log loss.
While effective, these approaches optimize solely for calibration, which can inadvertently harm task accuracy—particularly in larger models that may reward hack by outputting deliberately incorrect answers with zero confidence to achieve perfect calibration rewards.
Furthermore, both methods are developed and evaluated exclusively on non-reasoning tasks. 
In contrast, our method incentivizes both correctness and calibration, and is evaluated on both reasoning and non-reasoning benchmarks.

\vspace{-2.5mm}
\section{Conclusion}
\vspace{-1em}
We show that incorporating proper scoring rules into RL, via an objective we call \rlcf, enables reasoning models to improve both accuracy and calibration. Our approach trains models to reason about and verbalize uncertainty, preserving task performance while significantly improving calibration in- and out-of-distribution. We demonstrate that reasoning about uncertainty improves calibration, and that our method improves the self-consistency of confidence, and improves with test-time scaling. However, there remains significant room for improvement---even after \rlcf, out-of-domain calibration error is often high in an absolute sense, and models may still assign high confidence to multiple contradictory answers.
Nevertheless, these results suggest a path toward reasoning systems that are not only accurate, but reliably reason about and communicate uncertainty.

\section*{Acknowledgments}

This research was supported by the MIT-IBM Watson AI Lab and the DARPA AIQ program through the DARPA CMO contract number HR00112520025.

\bibliography{iclr2026_conference}
\bibliographystyle{iclr2026_conference}

\newpage

\appendix
\crefalias{section}{appendix}


\section{Proof of \cref{thm:main}}
\label{theorem}

\label{app:scoring-rules-proofs}

We consider a slightly more general family of reward functions than in the main paper body. As above, we assume that predictors produce a response \( y \), a confidence \( q \in [0,1] \); we now assume that scores depend additionally on an arbitrary binary correctness signal $c$, distributed according to some $p_y := p(c = 1 \mid y)$.

We consider all reward functions of the form:
\begin{equation*}
R(c, q) = \lambda c - S(q, c)
\end{equation*}
where \( S(q, c) \) is a scoring rule and \( \lambda > 0 \) is a scalar reward for producing the correct answer.

We define the expected reward of choosing a confidence \( q \) for a given response \( y \) as:
\begin{equation*}
V(y, q) = \E_c \, R(c, q) = \lambda p_y - \left[ p_y S(q, 1) + (1 - p_y)  S(q, 0) \right] ~ .
\end{equation*}

\begin{lemma}[Calibration incentive]
\label{lem:calibration}
For any response \( y \), the expected reward $V(y, q)$
is maximized at \( q = p_y \) if and only if \( S(q, c) \) is a proper scoring rule.
\end{lemma}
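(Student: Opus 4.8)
The plan is to recognize that this Lemma is essentially the definition of a proper scoring rule rephrased, so the proof amounts to carefully unpacking both sides of the "if and only if" against the definition given in the Preliminaries. Recall that the paper defines $S$ to be proper when $\E_{a \sim p(a)} S(p(a), a) \leq \E_{a \sim p(a)} S(q, a)$ for all $q$ — i.e.\ the expected score is \emph{minimized} at the truthful report. Since $R(c,q) = \lambda c - S(q,c)$ and the $\lambda c$ term does not depend on $q$, maximizing $V(y,q)$ over $q$ is the same as minimizing $q \mapsto p_y S(q,1) + (1-p_y) S(q,0)$, which is exactly the expected-score functional with ground-truth probability $p_y$.

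First I would write out $V(y,q) = \lambda p_y - g_y(q)$ where $g_y(q) := p_y S(q,1) + (1-p_y) S(q,0) = \E_{c \sim \mathrm{Bernoulli}(p_y)} S(q,c)$, and note that $\argmax_q V(y,q) = \argmin_q g_y(q)$ since $\lambda p_y$ is constant in $q$. For the ``if'' direction: assuming $S$ is proper, the defining inequality with $p(a) = p_y$ gives $g_y(p_y) \leq g_y(q)$ for all $q$, so $q = p_y$ is a minimizer of $g_y$, hence a maximizer of $V(y,\cdot)$. For the ``only if'' direction: suppose that for \emph{every} response $y$ — equivalently, for every value $p_y \in [0,1]$, since the hypothesis lets $p_y$ range freely — the map $q \mapsto V(y,q)$ is maximized at $q = p_y$. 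Then for every $p \in [0,1]$ and every $q$ we have $g_p(p) \leq g_p(q)$ where $g_p(q) = p\,S(q,1) + (1-p)S(q,0)$, which is precisely the statement that $S$ is proper. I would remark that the quantifier ``for any response $y$'' in the Lemma statement is what supplies the ``for all $p_y$'' needed here, tying back to the theorem's hypothesis that each $\mathbbm{1}_{y\equiv y^*} \sim \mathrm{Bernoulli}(p_y)$ with $p_y$ unconstrained.

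The main obstacle — really the only subtlety — is being precise about quantifier order in the ``only if'' direction: ``maximized at $q=p_y$ for any $y$'' must be read as ``for all $y$, and hence for all attainable $p_y$,'' and one should note whether strict vs.\ non-strict properness matters (here I would state the non-strict version to match the non-strict inequality in the Preliminaries, and perhaps add a one-line remark that if $S$ is \emph{strictly} proper then $q = p_y$ is the \emph{unique} maximizer, which is what makes the calibration incentive meaningful in practice). A secondary, purely expository point is to make explicit that $\lambda > 0$ plays no role in this particular lemma — it only matters for the correctness-incentive half of \cref{thm:main} — so the equivalence here holds for any $\lambda \in \mathbb{R}$. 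With those caveats flagged, the proof is a two-line chain of equivalences and needs no computation beyond substituting the definition of $g_y$.
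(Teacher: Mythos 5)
Your proof is correct and follows essentially the same route as the paper: drop the $\lambda p_y$ term as constant in $q$, observe that maximizing $V(y,\cdot)$ is minimizing $\E_{c\sim\mathrm{Bernoulli}(p_y)}S(q,c)$, and invoke the definition of a proper scoring rule. You are a bit more explicit than the paper about the ``only if'' direction and about reading ``for any $y$'' as quantifying over all $p_y\in[0,1]$, which is a worthwhile clarification but not a different argument.
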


\begin{proof}
The correctness term $\lambda p_y$ in the reward function does not depend on $q$, so
\begin{equation*}
\argmax_q ~ V(y, q) = \argmax_q ~ -p_y S(q, 1) + (1-p_y) S(q, 0) = \argmin_q ~ \E_c \, S(q, c) ~ .
\end{equation*}
But by a scoring rule is proper by definition if $\E_{c \sim \textrm{Bernoulli}(p_y)} \, S(q, c)$ is minimized by $q = p_y$, so the statement of the lemma follows immediately.
\end{proof}

\begin{lemma}[Correctness incentive]
\label{lem:correctness}
Consider $y$, $y'$ with associated success probabilities 
$p_y \geq p_{y'}$. Then $V(y, p_y) \geq V(y', p_{y'})$ if and only if
\[
S(p,1) - S(p,0) \leq \lambda \quad \text{for all } p \in [0,1] ~ .
\]
\end{lemma}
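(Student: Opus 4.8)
The plan is to collapse the statement into a one-dimensional monotonicity question. Define the \emph{generalized entropy} of the scoring rule by $H(p) := \min_{q}\big[p\,S(q,1)+(1-p)\,S(q,0)\big]$; by properness of $S$ (equivalently, by the computation in \cref{lem:calibration}) the minimum is attained at $q=p$, so $H(p)=p\,S(p,1)+(1-p)\,S(p,0)$. Then the value of a \emph{calibrated} prediction $(y,p_y)$ is $V(y,p_y)=\lambda p_y-H(p_y)=:g(p_y)$, and the correctness incentive in \cref{lem:correctness} --- that $V(y,p_y)\ge V(y',p_{y'})$ for \emph{every} pair with $p_y\ge p_{y'}$ --- is exactly the assertion that $g$ is non-decreasing on $[0,1]$. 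So it suffices to show that $g$ is non-decreasing on $[0,1]$ if and only if $d(p):=S(p,1)-S(p,0)$ satisfies $d(p)\le\lambda$ for all $p\in[0,1]$.

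The engine is a pair of ``discrete envelope'' bounds on $H$, each obtained by evaluating the minimand at a sub-optimal constant. For $p\ge p'$, plugging $q=p'$ into the definition of $H(p)$ and subtracting $H(p')=p'S(p',1)+(1-p')S(p',0)$ yields $H(p)-H(p')\le (p-p')\,d(p')$; plugging $q=p$ into the definition of $H(p')$ instead yields $H(p)-H(p')\ge (p-p')\,d(p)$. Hence
\[
(p-p')\,d(p)\;\le\;H(p)-H(p')\;\le\;(p-p')\,d(p')\qquad(p\ge p').
\]
In particular $d$ is non-increasing, so ``$d\le\lambda$ on $[0,1]$'' is equivalent to $d(0)\le\lambda$.

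Both directions are then short. \emph{(If.)} If $d\le\lambda$ everywhere, the right-hand bound gives $H(p_y)-H(p_{y'})\le\lambda(p_y-p_{y'})$ for $p_y\ge p_{y'}$, so $g(p_y)-g(p_{y'})=\lambda(p_y-p_{y'})-\big(H(p_y)-H(p_{y'})\big)\ge0$; note this uses only $d(p_{y'})\le\lambda$. \emph{(Only if.)} Contrapositively, if $d(p_0)>\lambda$ for some $p_0$, then (as $d$ is non-increasing) $d(p)>\lambda$ for every $p\le p_0$; the left-hand bound with $p=p_0$ and any $p'<p_0$ gives $g(p_0)-g(p')\le (p_0-p')\big(\lambda-d(p_0)\big)<0$, so $g$ is not monotone and the correctness incentive fails. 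I expect the one genuinely delicate point to be the endpoint $p_0=0$, where no $p'<p_0$ exists: this is a non-issue whenever $S(\cdot,c)$ is continuous on $[0,1]$ (the standard setting, and in particular that of the Brier score, for which $d(p)=1-2p$ and the condition is simply $\lambda\ge1$), since then the envelope bounds force $\lim_{p\to0^+}d(p)=d(0)$; for genuinely discontinuous scoring rules one should read the condition at the endpoints via the one-sided values, but this boundary bookkeeping --- rather than the envelope inequalities themselves --- is the only step requiring care.
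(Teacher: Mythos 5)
Your proof is correct, and it takes a genuinely different --- and arguably more elementary --- route than the paper's. The paper reduces the claim to showing that $W(p) = \lambda p - \left[p\,S(p,1) + (1-p)\,S(p,0)\right]$ is nondecreasing, then \emph{differentiates}, invoking the Savage--Dawid representation $S'(p,1) = -(1-p)\,\omega(p)$, $S'(p,0) = p\,\omega(p)$ for a nonnegative weight $\omega$; the $\omega$-terms cancel, leaving $W'(p) = \lambda - S(p,1) + S(p,0)$, which is nonnegative iff $d(p) := S(p,1)-S(p,0)\le\lambda$. (The paper's displayed simplification drops the $\lambda$ and flips a sign, reading $W'(p)=S(p,1)-S(p,0)$, but the conclusion drawn in the next sentence is the correct one.) That route implicitly assumes $S$ is smooth enough for the Savage--Dawid representation to apply. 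You instead treat $H(p)=\min_q\left[p\,S(q,1)+(1-p)\,S(q,0)\right]$ as a pointwise minimum of affine functions of $p$ and extract the envelope bounds $(p-p')\,d(p)\le H(p)-H(p')\le (p-p')\,d(p')$ by evaluating the minimand at the suboptimal constants $q=p'$ and $q=p$. These bounds require no differentiability, hand you monotonicity of $d$ for free, and deliver both directions of the biconditional in a few lines while isolating exactly which value of $d$ controls each increment (only $d$ at the lower point matters for the ``if'' direction). The one spot needing care is the endpoint $p_0=0$ in the contrapositive, which you correctly flag and which is harmless whenever $S(\cdot,c)$ is continuous --- in particular for the Brier score, where $d(p)=1-2p$. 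Net: same conclusion as the paper, reached under weaker hypotheses and with no appeal to a representation theorem.
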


\begin{proof}
First, define:
\[
    W(p) = \lambda p - p S(p, 1) + (1-p) S(p, 0) ~ ,
\]
Note that $V(y, p_y) = W(p_y)$, and $W(p)$ represents the maximum reward attainable for any $y$ with associated success probability $y_p$. Thus, to verify the statement of the lemma, it suffices to 
show that $W(p) \geq W(p')$ if and only if $p \geq p'$, which in turn is equivalent to showing that $W(p)$ is nondecreasing in $p$.


We first compute the derivative of $W$:
\[
W'(p) = \lambda - S(p, 1) + S(p, 0) - p S'(p, 1) - (1-p) S'(p, 0) ~ .
\]


From the Savage-Dawid representation~\citep{gneiting2007strictly} of proper scoring rules, there exists some non-negative weight function \( \omega(t) \ge 0 \) such that:
\[
S'(p, 1) = -(1-p) \cdot \omega(p) ~, \qquad S'(p, 0) = p \cdot \omega(p) ~ .
\]
Substituting this in:
\begin{align*}
W'(p) &= \lambda - S(p, 1) + S(p, 0) + p \cdot (1-p) \cdot \omega(p) -(1-p) \cdot p \cdot \omega(p) \\
&= S(p, 1) - S(p, 0) ~ .
\end{align*}
Thus the derivative of $W$ is non-negative ($W$ is nondecreasing) if and only if $S(p, 1) - S(p, 0) \leq \lambda$ for $p \in [0, 1]$.
%
\end{proof}

Then the main theorem statement in \cref{sec:method} follows immediately from these two lemmas:
\begin{proof}[Proof of \cref{thm:main}]
    First observe that $\rrlcf$ satisfies the conditions of \cref{lem:correctness} with $\lambda = 1$ and $S(q, \mathbbm{1}_{y\equiv y^*}) = (q - \mathbbm{1}_{y\equiv y^*})^2$, we have $\max_p ~ S(p, 1) - S(p, 0) = 1$. Then condition 1 (calibration) follows from \cref{lem:calibration} and the fact that the Brier score is a proper scoring rule, and condition 2 (correctness) follows from \cref{lem:correctness} and the boundedness of $S(p, 1) - S(p, 0)$.
\end{proof}

\begin{corollary}
\label{cor:bounded-unbounded}
Let \( S(q, c) \) be a strictly proper scoring rule.

\begin{enumerate}
    \item If \( S(p, 1) - S(p, 0) \) is \emph{bounded}, then there exists a finite \( \lambda > 0 \) such that the reward function \( R(c, q) = \lambda c - S(q, c) \) satisfies the correctness condition:
    \begin{equation*}
    S(p,1) - S(p,0) \leq \lambda \quad \text{for all } p \in [0,1]
    \end{equation*}
    and thus jointly incentivizes calibration and correctness.

    \item If \( S(p, 1) - S(p, 0) \) is \emph{unbounded}, then for any finite \( \lambda > 0 \), there may exist some
    $y \geq y'$ such that $W(p_y) < W(p_{y'})$, and $\rrlcf$ prefers $(y', p_{y'})$ to $(y, p_y)$.
\end{enumerate}
\end{corollary}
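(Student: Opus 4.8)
\textbf{Proof plan for \cref{cor:bounded-unbounded}.}

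The plan is to derive both parts directly from \cref{lem:calibration} and \cref{lem:correctness}, which together already characterize exactly when $R(c,q) = \lambda c - S(q,c)$ jointly incentivizes calibration and correctness: calibration holds for any proper $S$ (\cref{lem:calibration}), and correctness holds if and only if $S(p,1) - S(p,0) \leq \lambda$ for all $p \in [0,1]$ (\cref{lem:correctness}). So the corollary is really just a restatement of these conditions in terms of boundedness of the \emph{score gap} $g(p) := S(p,1) - S(p,0)$.

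For part 1, suppose $g$ is bounded on $[0,1]$. Then I would set $\lambda := \sup_{p \in [0,1]} g(p)$, which is finite by hypothesis, and moreover $\lambda > 0$: for a strictly proper scoring rule one always has $S(p,1) > S(p,0)$ for $p$ close to $1$ (predicting confidence matching a likely-positive outcome must beat predicting it matching a likely-negative one), so $g$ takes strictly positive values and hence $\lambda > 0$. With this choice, $g(p) \leq \lambda$ for all $p$ by construction, so \cref{lem:correctness} gives the correctness incentive, and \cref{lem:calibration} gives the calibration incentive since $S$ is proper. This is the easy direction and is essentially immediate.

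For part 2, suppose $g$ is unbounded. Fix any finite $\lambda > 0$. Since $g$ is unbounded, there exists $p^\star \in [0,1]$ with $g(p^\star) > \lambda$; recalling from the proof of \cref{lem:correctness} that $W'(p) = g(p)$, this means $W$ is \emph{strictly decreasing} at $p^\star$, so by continuity there is a small interval $[p', p^\star] $ (with $p' < p^\star$ both near $p^\star$) on which $W(p^\star) < W(p')$. I would then exhibit two responses $y, y'$ with success probabilities $p_y = p^\star$ and $p_{y'} = p'$; since $V(y, p_y) = W(p_y) = W(p^\star) < W(p') = V(y', p_{y'})$, the reward ranks the \emph{lower-probability} response higher, so $\rrlcf$ prefers $(y', p_{y'})$ to $(y, p_y)$ even though $p_y \geq p_{y'}$, contradicting the correctness incentive. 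The one subtlety is that $g$ blows up only near the boundary $p \to 0$ or $p \to 1$ (e.g.\ for the log score, $g(p) = \log p - \log(1-p) \to +\infty$ as $p \to 1$), so I should choose $p^\star$ sufficiently close to the relevant endpoint; this is where a concrete instantiation such as the log loss makes the construction transparent.

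The main obstacle is not any deep calculation but rather stating part 2 at the right level of generality: the claim "there \emph{may} exist some $y \geq y'$ with $W(p_y) < W(p_{y'})$" is really the assertion that unboundedness of $g$ forces $W$ to fail to be monotone, so the key step is to convert "$g$ unbounded" into "$g(p^\star) > \lambda$ for some $p^\star$", which is immediate, and then to invoke $W' = g$ from the already-proved \cref{lem:correctness}. Care is only needed to ensure the two witnessing success probabilities $p_y, p_{y'}$ are genuinely realizable values in $[0,1]$, which they are since $p^\star$ and any nearby $p'$ lie in $[0,1]$ by construction.
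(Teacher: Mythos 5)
Your overall plan — derive both parts directly from \cref{lem:calibration} and \cref{lem:correctness} by analyzing the score gap $g(p) := S(p,1) - S(p,0)$ — is exactly the reading the paper intends; the corollary is presented there as an immediate consequence of the two lemmas, with the Brier/log examples serving in place of a formal proof. Two small slips are worth noting, though neither breaks the argument. First, in part~1 you justify $\lambda = \sup_p g(p) > 0$ by claiming $S(p,1) > S(p,0)$ for $p$ near~$1$; since $S$ is a \emph{loss}, a confidence near $1$ together with outcome $1$ is a \emph{good} match, so in fact $S(p,1) < S(p,0)$ there. The positivity of $\sup g$ comes from the other endpoint: near $p=0$, confidence $0$ with outcome $1$ is a bad match, so $S(p,1) > S(p,0)$ and hence $g(p) > 0$ (for Brier, $g(0)=1$, $g(1)=-1$). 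Second, in part~2 you write, following the paper's displayed line, that $W'(p) = g(p)$, yet then conclude $g(p^\star) > \lambda$ makes $W$ \emph{decreasing} at $p^\star$; with $W' = g > \lambda > 0$ this would make $W$ increasing. The correct derivative, which the lemma's own ``if and only if'' statement relies on, is $W'(p) = \lambda - S(p,1) + S(p,0) = \lambda - g(p)$ (the cancellation from the Savage representation removes only the $S'$ terms, not $\lambda$); with this, $g(p^\star) > \lambda$ indeed gives $W'(p^\star) < 0$ and your construction of $p' < p^\star$ with $W(p') > W(p^\star)$ goes through. So your conclusions are right, but tighten the stated derivative and fix the endpoint in the $\lambda > 0$ argument.
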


%
%

\medskip
\noindent\textbf{Examples.} The Brier score is bounded: \( S(p,1) = (1 - p)^2 \), \( S(p,0) = p^2 \), so:
\[
S(p,1) - S(p,0) = 1 - 2p \leq 1 \quad \text{for all } p \in [0,1]
\]
Thus, the condition holds for \( \lambda = 1 \).

In contrast, the logarithmic score is unbounded:
\[
S(p,1) = -\log p, \quad S(p,0) = -\log(1 - p), \quad
S(p,1) - S(p,0) = \log\left( \frac{1 - p}{p} \right) \to \infty \quad \text{as } p \to 0
\]
So no finite \( \lambda \) can satisfy the condition.

\newpage

\section{Experimental Setup}

\subsection{Training Datasets}
\label{app:dataset_details}
\textbf{HotpotQA-Modified:} We use a modified version of the HotPotQA distractor dataset, which contains factual questions requiring multi-hop reasoning. ~\citep{yang2018hotpotqadatasetdiverseexplainable}. 
Each example in this setting presents ten paragraphs, only two of which contain the information necessary to answer the question; the remaining eight paragraphs include closely related but irrelevant details. 
Consequently, solving this task requires the model to identify and reason over the pertinent passages.
To more strongly develop uncertainty reasoning capability, we construct a new dataset, \textit{HotPotQA-Modified}, in which we systematically remove either 0, 1, or both of the key paragraphs required to answer each question. 
This modification introduces varying levels of informational completeness that the model must reason over. 
We distribute questions across three equal groups: one-third have no relevant paragraphs (0/8), one-third have 1 relevant paragraph (1/7), and one-third have both relevant paragraphs (2/6). Each question consistently contains 8 total paragraphs.
Our training dataset consists of 20,000 examples. 
We measure correctness using exact string match.

\textbf{Big-Math Digits:}
We use Big-Math~\citep{albalak2025big}, a large, curated training dataset for RL containing over 250,000 math problems, including questions from benchmarks such as Math and GSM8K.
To ensure an appropriate range of difficulty, we retain problems for which the LLaMA-8B solve rate (provided in the dataset) is between 0-70\%.
We also found that verifier noise can be significant in Math datasets and can cause training instability.
To reduce verifier noise, we further restrict the dataset to problems with numerical answers, enabling near-perfect automatic verification.
Our final training set consists of 15,000 problems.
We compute correctness using \emph{math-verify}.

\subsection{Additional Training Details}
\label{appendix:training_details}
Following~\cite{turtel2025outcome}, we remove the standard deviation division in the advantage, which might help with learning on examples where there are extreme miscalibrations. 
We use the BNPO loss function, which aggregates token level losses using the number of active tokens in the local training batch~\citep{xiao2025bnpo}.
We generate $32$ responses per prompt with a temperature of $0.7$, and use an effective batch size of $2048$. Experiments were conducted on both NVIDIA A100 and H100 GPUs (and we observed consistent results across hardware types). We use a constant learning rate with warmup of 1e-06 for HotpotQA and 5e-06 for Math. We use a warmup ratio 0.1. 
For Hotpot, we set a maximum completion length of 1536 while for Math, we use a completion length of 4096. 
Hotpot requires significantly less reasoning, and using a smaller completion length helped improve training time. 
We do $1$ epoch of training.
For training \emph{\rlcf}, we use the \textit{Long \rlcf} system prompt for Hotpot and the \textit{Simple \rlcf} prompt for Math (the long version did not provide additional benefit on Math). 
We use the \textit{Simple Generation} prompt for \emph{\rlvf}.
All prompts in \cref{app:systemPrompts}.

\textbf{Format Reward:}
We use a format reward to encourage adherence to the structured format shown in \cref{fig:formatted-tags-ex}. 
In \rlvf, models must format their output in \texttt{<think>} and \texttt{<answer>}  tags. 
In \rlcf, in addition to \texttt{<think>} and \texttt{<answer>}  tags, we require an \texttt{<analysis>} tag to enclose  uncertainty reasoning and a \texttt{<confidence>} tag for verbalized confidence.
A valid response must contain all these tags in the correct order. 
Both format and calibration rewards are weighted equally.

\textbf{SFT Warmup in Math:}
While RL directly on the base model improves calibrated reward, the uncertainty analyses produced in Math remain qualitatively generic, often lacking reasoning tied to specific solution steps (See \cref{appendix:math-eg} for an example).
To improve their quality, we train a variant with a lightweight SFT warmup phase before RL to obtain higher quality uncertainty analyses.
We generate solutions from the base model on $500$ examples and prompt Deepseek-R1 with the \textit{Expert SFT Prompt} to produce uncertainty analyses for them.
We then perform SFT with the \texttt{<think>} and \texttt{<answer>} obtained from the base model, appended with the \texttt{<analysis>} obtained from Deepseek-R1.
Note that we do not ask Deepseek-R1 to output confidence scores. 


\subsection{Evaluation Datasets}
We run evaluation on a large number of datasets: 
\begin{enumerate}
    \item \textbf{HotPotQA (Distractor):} We use $1000$ validation examples from the original HotpotQA distractor dataset. We slightly modify the dataset and remove $2$ non-relevant paragraphs from each question. Thus, each question has $8$ paragraphs with both supporting paragraphs present. We measure correctness using exact-match~\citep{yang2018hotpotqadatasetdiverseexplainable}. 
    \item \textbf{HotPotQA-Modified:} We evaluate on $500$ held-out validation examples. We measure correctness using exact-match. 
    \item \textbf{TriviaQA:} We use $2000$ examples from the validation set of the TriviaQA dataset~\citep{joshi2017triviaqa}. We use the no-context split to purely test factual accuracy.We evaluate using LLM-as-a-judge.  
    \item \textbf{SimpleQA:} We use the full SimpleQA dataset consisting of $4326$ factual questions~\citep{wei2024measuring}. We evaluate using LLM-as-a-judge. 
    \item \textbf{Math-500} We use the popular MATH-500 dataset, which contains a subset of problems from the original MATH dataset~\citep{hendrycks2021measuring}. We evaluate using \emph{math-verify}, a mathematical expression evaluation system released by huggingface. 
    \item \textbf{GSM8K:} We use the test set ($1319$ problems) of the popular Grade School Math 8K dataset~\citep{cobbe2021training}. We evaluate using \emph{math-verify}. 
    \item \textbf{Big-Math-Digits:} We evaluate on $1000$ held-out validation examples. We evaluate using \emph{math-verify}. 
    \item \textbf{CommonSenseQA:} We use the validation set ($1220$ problems) of the CommonsenseQA dataset~\citep{talmor2018commonsenseqa}, a multiple-choice question answering dataset that requires different types of commonsense knowledge to predict the correct answers. We evaluate using LLM-as-a-judge. 
    \item \textbf{GPQA:} We use the GPQA main dataset containing $448$ multiple-choice questions written by experts in biology, physics, and chemistry~\citep{rein2024gpqa}. We evaluate using LLM-as-a-judge.

\end{enumerate}

\subsection{Evaluation Details}
\label{app:eval_details}
All models are evaluated with temperature $0$.
For all datasets except Math and GSM8K, we use a maximum token budget of $4096$.
The system prompt for evaluation and the pipeline to extract answer and confidence scores varies slightly based on the method we are evaluating:
\begin{enumerate}
    \item \textbf{\rlcf (ours):} \rlcf models use \texttt{<think>}, \texttt{<answer>}, \texttt{<analysis>} and \texttt{<confidence>} tags. They are evaluated with the same system prompts they are trained on. We extract their answer from \texttt{<answer>} tag and their confidence from \texttt{<confidence>} tag.
    \item \textbf{\rlvf:} \rlvf models use the \texttt{<think>} and \texttt{<answer>}. It is evaluated with the same system prompt and we extract their answer from the \texttt{<answer>} tag. To obtain their verbalized confidence, we append \textit{"Thinking time ended. My verbalized confidence in my answer as a number between 0 and 100 is equal to"} to their generated output.
    \item \textbf{Classifier/Probe:} Both methods are conditioned on the question and the \emph{\rlvf} model's generation (solution and answer). These methods thus use \emph{\rlvf} model as a generator and their reported accuracies in the result tables are equal. 
    \item \textbf{Base:} The base model is not good at instruction following and is prompted with a simpler system prompt (\textit{Simple Confidence Prompt}) that guides it to use \texttt{<think>}, \texttt{<answer>} and \texttt{<confidence>} tags. In case no valid confidence can be extracted, we append \textit{"Thinking time ended. My verbalized confidence in my answer as a number between 0 and 100 is equal to"} to their output and call them again to extract confidence. 
\end{enumerate}
For all methods, if we are unable to extract a valid answer from the \texttt{<answer>} tags, we append \textit{"Thinking time ended. My final answer is"} to their output and call them again. 
The main goal of these custom pipelines is to be able to fairly extract an answer and a confidence level and minimize cases where incorrect formatting adversely affects performance. 
Note that because they are trained with format rewards, both the RL-trained models are nearly perfect in adhering to the desired format and require minimal interventions. 
However, the base model benefits from this full extraction pipeline. 
\textbf{Importantly, once answers and confidences are extraced, all methods are evaluated identically and based on the dataset, exact-match, LLM-as-a-judge or math-verify is used.}

\textbf{LLM-as-a-judge:} We use Llama-3.1-8B-Instruct with temperature set to 0 as our judge. 
The judge is provided with the question, the ground truth answer and the answer extracted from the evaluation pipeline.
It is prompted to respond with "YES" or "NO" based on the correctness of the answer.
As the datasets we evaluate have short and objective answers, we do not condition the judge on the thinking traces which can add biases.

\section{System Prompts} 
\label{app:systemPrompts}

\vspace{-2em}
\begin{frame}{}
  \begin{tcolorbox}[colback=gray!5!white, colframe=gray!75!black, title=Long \rlcf Prompt]
    "A conversation between User and Assistant. The user asks a question, and the Assistant solves it. The assistant first thinks about the reasoning process in the mind, provides the user with the final answer, then analyzes its confidence about the solution and then provides the user with its confidence level. The confidence level is a number between 0 and 1 (inclusive) enclosed within $\texttt{<confidence> </confidence>}$  tags. The final answer is enclosed between $\texttt{<answer> </answer>}$ tags. The analysis about confidence and uncertainty is enclosed within $\texttt{<analysis> </analysis>}$ tags. The assistant should reason about its confidence in the solution and its uncertainty in the solution within these tags. Here are some guidelines for the analysis: 
    1. Your task is to point out things where the model could be wrong in its thinking, or things where there might be ambiguity in the solution steps, or in the reasoning process itself.
    
    2. You should not suggest ways of fixing the response, your job 
    is only to reason about uncertainties.
    
    3. For some questions, the response might be correct. In these cases, It is also okay to have only a small number of uncertainties and then explicitly say that I am unable to spot more uncertainties.
    
    4. Uncertainties might be different from errors. For example, uncertainties may arise from ambiguities in the question, or from the application of a particular lemma/proof. 
    
    5. If there are alternate potential approaches that may lead to different answers, you should mention them.

    6. List out plausible uncertainties, do not make generic statements, be as specific about uncertainties as possible.
    
    7. Enclose this uncertainty analysis within $\texttt{<analysis> </analysis>}$ tags.
    
    The final format that must be followed is : $\texttt{<think>}$ reasoning process here $\texttt{</think> <answer>}$ final answer here $\texttt{</analysis> <analysis>}$ analysis about confidence and uncertainty here $\texttt{</analysis> <confidence>}$ confidence level here (number between 0 and 1) $\texttt{</confidence>}$
)

  \end{tcolorbox}
\end{frame}

\begin{frame}{}
  \begin{tcolorbox}[colback=gray!5!white, colframe=gray!75!black, title=Simple \rlcf Prompt]
    A conversation between User and Assistant. The user asks a question, and the Assistant solves it. The Assistant first thinks about the reasoning process in the mind, provides the user with the final answer, then analyzes its confidence about the solution and provides the user with its confidence level. 
    The confidence level is a number between 0 and 1 (inclusive) enclosed within \texttt{<confidence> </confidence>} tags. The final answer is enclosed between \texttt{<answer> </answer>} tags. The analysis about confidence and uncertainty is enclosed within \texttt{<analysis> </analysis>} tags. The Assistant should reason about its confidence in the solution and its uncertainty in the solution within these tags.
    The final format that must be followed is:
    \texttt{<think>} reasoning process here \texttt{</think><answer>} final answer here \texttt{</answer><analysis>} analysis about confidence and uncertainty here \texttt{</analysis><confidence>} confidence level here (number between 0 and 1) \texttt{</confidence>}
  \end{tcolorbox}
\end{frame}

\begin{frame}{}
  \begin{tcolorbox}[colback=gray!5!white, colframe=gray!75!black, title=Simple Confidence Prompt]
    A conversation between User and Assistant. The user asks a question, and the Assistant solves it. The assistant first thinks about the reasoning process in the mind and analyzes its confidence about the solution and then provides the user with the final answer as well as its confidence level. The confidence level is a number between 0 and 1 (inclusive) enclosed within \texttt{<confidence> </confidence>} tags. The final answer is enclosed between \texttt{<answer> </answer>} tags. The final format that must be followed is : \texttt{<think>} reasoning process here \texttt{<//think><answer>} final answer here \texttt{</answer> <confidence>} confidence level here (number between 0 and 1) \texttt{</confidence>}.
  
  \end{tcolorbox}
\end{frame}

\begin{frame}{}
  \begin{tcolorbox}[colback=gray!5!white, colframe=gray!75!black, title=\rlvf Prompt]
    A conversation between User and Assistant. The user asks a question, and the Assistant solves it. The assistant first thinks about the reasoning process in the mind and then provides the user with the answer. The reasoning process and answer are enclosed within \texttt{<think> </think>} and \texttt{<answer> </answer>} tags, respectively, i.e., \texttt{<think>} reasoning process here \texttt{</think><answer>} answer here \texttt{</answer>}.
  
  \end{tcolorbox}
\end{frame}

\begin{frame}{}
  \begin{tcolorbox}[colback=gray!5!white, colframe=gray!75!black, title=Expert SFT Prompt]
    You are given a question and a solution to it. You have to verify if the solution is correct and enclose your verification reasoning within $\texttt{<analysis> </analysis>}$ tags. Your analysis should be a minimum of 300 characters and should sequentially go through the thinking solution step by step. Here are the guidelines for your analysis: 
    
    1. Your analysis should also be in 'I' form as if you wrote the solution and are now verifying it.

    2. Your goal is not to solve the problem but instead to verify if the steps in the presented solution are correct.

    3. If there are ambiguities in the solution steps or if a step introduces uncertainty, you should mention it in the analysis.

    4. Go through the solution sequentially in a step-by-step manner.

    5. The analysis should be 300 characters minimum.

    6. Enclose this uncertainty analysis within $\texttt{<analysis> </analysis>}$ tags.
  \end{tcolorbox}
\end{frame}

\section{LLM Usage}
The authors made limited use of ChatGPT to refine wording for clarity. It was not used for research ideation, related work retrieval, or substantive content generation.

\newpage

\section{Comparing Log Score and Brier Score}

\begin{wrapfigure}{r}{0.4\textwidth} 
    \centering
    \includegraphics[width=0.95\linewidth]{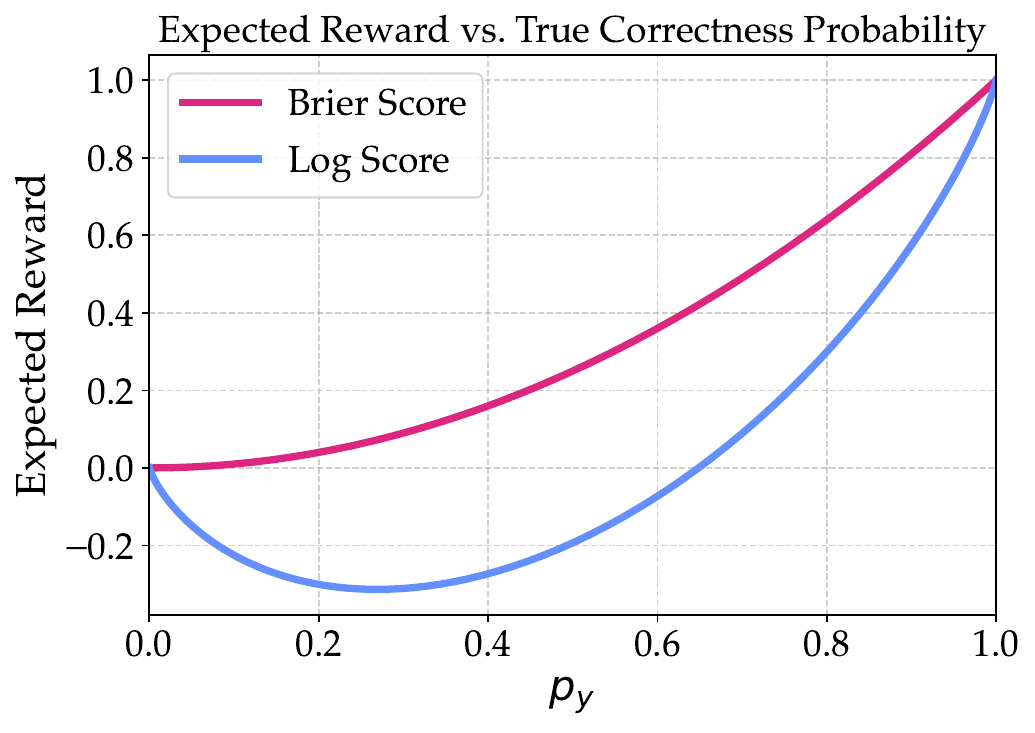}
    \caption{
    Expected Reward vs correctness probability $p_y$.
    }
    \vspace{-5em}
    \label{fig:log-plot}
\end{wrapfigure}

\subsection{Intuition}
Logarithmic score and Brier scores are both widely used proper scoring rules. 
\cref{lem:calibration} showed that combining proper scoring rules with correctness rewards preserves the calibration incentive.
Models are incentivized to output true correctness probability $q=p_y$ for both log and Brier scores. 
Using this, we can write out the expected reward when the model outputs answer $y$ and honestly reports confidence $q=p_y$:
\begin{align}
\mathbb{E}_{\text{brier}}[R(y)]
&= p_y  \bigl(1 - (1-p_y)^2\bigr)
   + (1-p_y) (-p_y^2) \\[6pt]
\mathbb{E}_{\log}[R(y)]
&= p_y (1+\log p_y) + (1-p_y)\log(1-p_y).
\end{align}

\cref{fig:log-plot} plots the expected reward as a function of $p_y$. While the Brier-based expected reward is strictly increasing in $p_y$, the expected reward under the log score is not.
This means that, when using log score combined with correctness rewards, there exist ranges of $p_y$ where the model receives higher expected reward by reporting answers with \emph{lower} true correctness probability.
Intuitively, the log score’s calibration term can outweigh the correctness bonus, creating a non-monotonic reward landscape that can discourage the model from preferring more accurate answers.

\subsection{Toy Experiment}
\label{toy:exp}
\textbf{Environment Setup.}
To empirically validate this phenomenon, we construct a simple $K$-arm prediction task. 
For each instance, Nature samples a probability vector 
\[
\mathbf{p} = (p_1,\dots,p_K), \qquad p_i \ge 0,\ \sum_{i=1}^K p_i = 1,
\]
which is \emph{unknown} to the model. 
A sequence of $N$ IID draws
\[
x_1,\dots,x_N \sim \mathbf{p}, \qquad x_t \in \{1,\dots,K\},
\]
is revealed to the model in the prompt. The value of $N$ is drawn uniformly from 
\[
N \sim \mathrm{Unif}\{0,1,2,3,4,5\}.
\]

The model's task is to predict the next draw
\[
x_{N+1} \sim \mathbf{p},
\]
by outputting an arm index $\hat{y} \in \{1,\dots,K\}$ along with a confidence value $q \in [0,1]$.  
The model is free to output an invalid answer $\hat{y}=-1$ with $q=0$ if it wishes to abstain.
We use $K=5$ throughout.  
A sample task instance is shown below.

\begin{frame}{}
  \label{toy-task-box}
  \begin{tcolorbox}[colback=gray!5!white, colframe=gray!75!black, title=Toy Arm Task Instance]
    There are 5 arms, each arm has a different probability of being sampled. You are given some draws from this 5-arm distribution. Your task is to predict which arm the next draw will be from. You are also free to output -1 with confidence 0 if you are really unsure.

    Observed draws (in order): 1, 1, 0.  
    
    Number of observed draws shown: 3  
    
    Answer with the arm index (0-4) that you predict for the next draw.
  
  \end{tcolorbox}
\end{frame}

Since the observation sequence is short and $\mathbf{p}$ varies arbitrarily across tasks, this setting induces substantial aleatoric uncertainty. 
The Bayes-optimal prediction conditioned on the observed data is the arm with highest empirical frequency,
\[
\hat{y}^{\star} = \arg\max_{i \in \{1,\dots,K\}} \sum_{t=1}^N \mathbf{1}\{x_t = i\}.
\]

This environment is intentionally designed so that the model is often highly uncertain about the correct arm.  
Referring to ~\cref{fig:log-plot}, we observe that under the logarithmic scoring rule, the expected reward in such low-confidence regimes is maximized by outputting an intentionally incorrect answer with confidence $0$, rather than attempting a good-faith prediction.

\textbf{Training Setup.} We train RLCR-Brier and RLCR-Log on $10,000$ examples from the above dataset. We use the Qwen-2.5-7B model and use the same training configuration as \cref{appendix:training_details}. For simplicity, we do not ask the model to do any uncertainty reasoning.

\textbf{Results.} 

\begin{wrapfigure}{r}{0.47\textwidth}
    \vspace{-10pt}
    \centering
    \footnotesize
    \renewcommand{\arraystretch}{1.3}
    \scalebox{0.87}{
    \begin{tabular}{l|ccc}
        \toprule
        \textbf{Method} 
        & \textbf{Acc.} &  \textbf{Brier} & \textbf{ECE} \\
        \midrule
        RLCR-Brier \textbf{(ours)} & \textbf{34.4\%} &  0.22  & 0.02 \\
        RLCR-Log & 0 & \textbf{0.00}  & \textbf{0.00} \\
        \bottomrule
    \end{tabular}}
    \caption{Performance on the Toy Arm Task dataset. RLCR-Log collapses to degenerate solution, while RLCR-Brier achieves non-trivial accuracy and calibration.}
    \label{tab:bandit-tasks}
    \vspace{-10pt}
\end{wrapfigure}

\cref{tab:bandit-tasks} presents evaluation results on the Toy Arm task.
Sample outputs from the two models are also shown below. 
The empirical findings closely match our theoretical predictions.  
Under the logarithmic scoring rule, the model rapidly converges to a degenerate policy: it outputs the invalid arm ($-1$) with reported confidence $q=0$ for nearly all tasks.  
This strategy yields the \emph{maximum} possible expected reward under the log score in the low–$p_y$ regime, and consequently the model achieves an almost perfect calibration metric.  
However, because the model never predicts a valid arm, its accuracy on the underlying task collapses to $0$.

In contrast, RLCR-Brier behaves qualitatively differently.  
Because the expected Brier reward is strictly increasing in the true correctness probability, collapsing to the invalid low-confidence answer is strongly suboptimal.  
Models trained with RLCR-Brier therefore continue to make genuine predictions about the next arm, achieving both non-trivial accuracy and stable calibration.  

\begin{frame}{}
  \label{toy-task-box}
  \begin{tcolorbox}[colback=magenta!10!white,colframe=magenta!60!black,title=RLCR-Brier Sample Output, fonttitle=\bfseries]
     $\texttt{<think>}$ Analyzing the frequency of each arm $\texttt{</think>}$$\texttt{<answer>}$ 1 $\texttt{</answer>}$ $\texttt{<confidence>}$ 0.25 $\texttt{</confidence>}$
  
  \end{tcolorbox}
\end{frame}

\begin{frame}{}
  \label{toy-task-box}
  \begin{tcolorbox}[colback=blue!5!white,colframe=blue!70!black,title=RLCR-Log Sample Output, fonttitle=\bfseries]
     $\texttt{<think>}$ Based on the given information, there is no data to predict the next draw. $\texttt{</think>}$$\texttt{<answer>}$ -1 $\texttt{</answer>}$ $\texttt{<confidence>}$ 0$\texttt{</confidence>}$
  
  \end{tcolorbox}
\end{frame}

\subsection{Experiments on HotPot}

\begin{table}[h!]
    \renewcommand{\arraystretch}{1.3}
    \centering
     \scalebox{0.87}{
    \begin{tabular}{l|cccc|cccc}
        \toprule
        \textbf{Method} 
        & \multicolumn{4}{c|}{\textbf{HotpotQA}} 
        & \multicolumn{4}{c}{\textbf{O.O.D Averaged}} \\
        \cmidrule(lr){2-5} \cmidrule(lr){6-9}
        & \textbf{Acc.} & \textbf{AUROC} & \textbf{Brier} & \textbf{ECE}
        & \textbf{Acc.} & \textbf{AUROC} & \textbf{Brier} & \textbf{ECE} \\
        \midrule
     RLVR & \textbf{63.0}\% & 0.50 & 0.37 & 0.37 &
     {53.9\%} & 0.50 & 0.46 & 0.46 \\
       RLCR-Brier (\textbf{ours}) & 62.1\% & \textbf{0.69} & \textbf{0.21} & \textbf{0.03}
                             & \textbf{56.2}\% & \textbf{0.68} & \textbf{0.21} & \textbf{0.21} \\
        RLCR-Log  & 59.5\% & 0.68 & 0.22 & 0.07
                             & 53.6\% & 0.67 & 0.22 & \textbf{0.21} \\
        \bottomrule
    \end{tabular}}
    \caption{\textbf{Comparison of RLCR-Brier and RLCR-Log on HotpotQA and $6$ out-of-distribution (O.O.D.) datasets}. RLCR-Brier marginally outperforms RLCR-Log in both accuracy and calibration.}
    \label{tab:hotpot-log}
\end{table}

We also evaluate the two scoring rules in a realistic QA setting.
To do so, we train an RLCR-Log model using the same training configuration as the RLCR models used in our main HotpotQA results (see \cref{tab:stacked_tables_joint_caption}).
\cref{tab:hotpot-log} reports a direct comparison between RLCR-Brier and RLCR-Log; the RLCR-Brier and RLVR numbers are copied over from the main results table.

Overall, RLCR-Log performs slightly worse than RLCR-Brier on both accuracy and calibration.
However, unlike the toy bandit setting, we do not observe any evidence of reward hacking: the RLCR-Log model maintains reasonable accuracy and does not collapse to degenerate predictions.
This suggests that even though using log score can lead to reward hacking, for many datasets hacking might not happen in practice. 
We believe the emergence of hacking is dependent on both the data distribution as well as the model size.



\section{Generalization to other Models}

\subsection{OlMo-2}
\begin{table}[h!]
    \renewcommand{\arraystretch}{1.3}
    \centering
     \scalebox{0.87}{
    \begin{tabular}{l|cccc|cccc}
        \toprule
        
        \textbf{Method} 
        & \multicolumn{4}{c|}{\textbf{HotpotQA}} 
        & \multicolumn{4}{c}{\textbf{O.O.D Averaged}} \\
        \cmidrule(lr){2-5} \cmidrule(lr){6-9}
        & \textbf{Acc.} & \textbf{AUROC} & \textbf{Brier} & \textbf{ECE}
        & \textbf{Acc.} & \textbf{AUROC} & \textbf{Brier} & \textbf{ECE} \\
        \midrule
     Base & 16.4\% & 0.55 & 0.78 & 0.79 &
     {47.8\%} & 0.56 & 0.48 & 0.48 \\
       RLVR & \textbf{61.7\%} & 0.51 & 0.38 & 0.38
                             & \textbf{50.8}\% & 0.53 & 0.48 & 0.48 \\
        RLCR (\textbf{ours})  & 61.3\% & \textbf{0.58} & \textbf{0.24} & \textbf{0.09} 
                             & 49.3\% & \textbf{0.65}& \textbf{0.22} & \textbf{0.20} \\
        \bottomrule
    \end{tabular}}
    \caption{\textbf{Results of RLCR and RLVR training using OlMo-2-7B-Instruct}. RLCR has comparable accuracy to RLVR, while significantly outperforming it in calibration.}
    \label{tab:app-olmo}
\end{table}

\textbf{Setup:} To assess how well the RLCR framework generalizes across model families, we train RLCR and RLVR variants starting from the OlMo-2-7B-Instruct model~\citep{olmo20242} on our HotpotQA-Modified dataset.

\textbf{Results:} \cref{tab:app-olmo} reports the final performance. The overall trends are consistent with our main findings: RLCR and RLVR achieve similar accuracy, and both substantially improve accuracy over the base model. RLCR also delivers markedly better calibration than both RLVR and the base model. 

\subsection{Qwen3}
\begin{table}[h!]
    \renewcommand{\arraystretch}{1.3}
    \centering
    \scalebox{0.87}{
    \begin{tabular}{l|cccc|cccc}
        \toprule
        
        \textbf{Method} 
        & \multicolumn{4}{c|}{\textbf{HotpotQA}} 
        & \multicolumn{4}{c}{\textbf{O.O.D Averaged}} \\
        \cmidrule(lr){2-5} \cmidrule(lr){6-9}
        & \textbf{Acc.} & \textbf{AUROC} & \textbf{Brier} & \textbf{ECE}
        & \textbf{Acc.} & \textbf{AUROC} & \textbf{Brier} & \textbf{ECE} \\
        \midrule
        Base 
        & 61.1\% & 0.53 & 0.35 & 0.34 
        & 62.8\% & 0.59 & 0.28 & 0.28 \\
        RLVR
        & \textbf{62.7\%} & 0.53 & 0.36 & 0.36
        & 65.5\% & 0.62 & 0.28 & 0.29 \\
        RLCR (\textbf{ours})  
        & 61.8\% & \textbf{0.58} & \textbf{0.28} & \textbf{0.23}
        & \textbf{65.6\%} & \textbf{0.71} & \textbf{0.17} & \textbf{0.17} \\
        \bottomrule
    \end{tabular}}
    \caption{\textbf{Results of RLCR and RLVR training using Qwen-3-8B}. RLCR maintains accuracy while substantially improving calibration in both in-distribution and OOD settings.}
    \label{tab:app-qwen}
\end{table}

We further assess performance on the newly released Qwen3 model family. Under in-distribution evaluation, RLCR matches RLVR in accuracy and surpasses it on AUROC, Brier, and ECE. When averaged across our OOD benchmarks, RLCR continues to outperform RLVR on AUROC, Brier, and ECE, with accuracy remaining effectively equivalent.

\section{Does reasoning improve calibration?}
\label{app:reasoning_classifiers}
Recent work has shown that CoT reasoning can be unfaithful, with generated CoTs that do not influence their final answers \citep{chen2025reasoningmodelsdontsay}. This raises the possibility that uncertainty analysis may not meaningfully inform the verbalized confidence score. 
To test this, we train two classifiers on \emph{HotPotQA-Modified}: 
\begin{enumerate}
    \item \textbf{Baseline classifier:} Trained on \rlvf outputs (these contain no uncertainty analysis).
    \item \textbf{Analysis classifier:} Trained on outputs of \rlcf with confidence scores (present within \texttt{<confidence>} tags) removed to prevent direct hacking. 
\end{enumerate}

As both RL models have comparable task accuracy, differences in classifier performance would indicate that the \rlcf-trained model's reasoning chains contain information specifically useful for calibration.
We train classifiers for $3$ different model sizes of the Qwen-base model: 0.5B, 1.5B and 7B. 
\Cref{fig:classifiers} shows Brier and ECE scores on HotPotQA-Modified. Interestingly, while 7B classifiers perform similarly, the \emph{analysis classifier} outperforms the \emph{baseline} at smaller sizes, suggesting classifier capacity is key. 
For a sufficiently expressive classifier (as with the 7B model), it is possible to infer confidence-relevant features directly from the solution.
In contrast, smaller classifiers can make better use of \rlcf reasoning chains.
We believe that broader questions about the relationship between classifier capacity and CoT contents are an important topic for future work.

\vspace{2\baselineskip} 



\begin{figure}[h!]
\centering
\includegraphics[width=\textwidth]{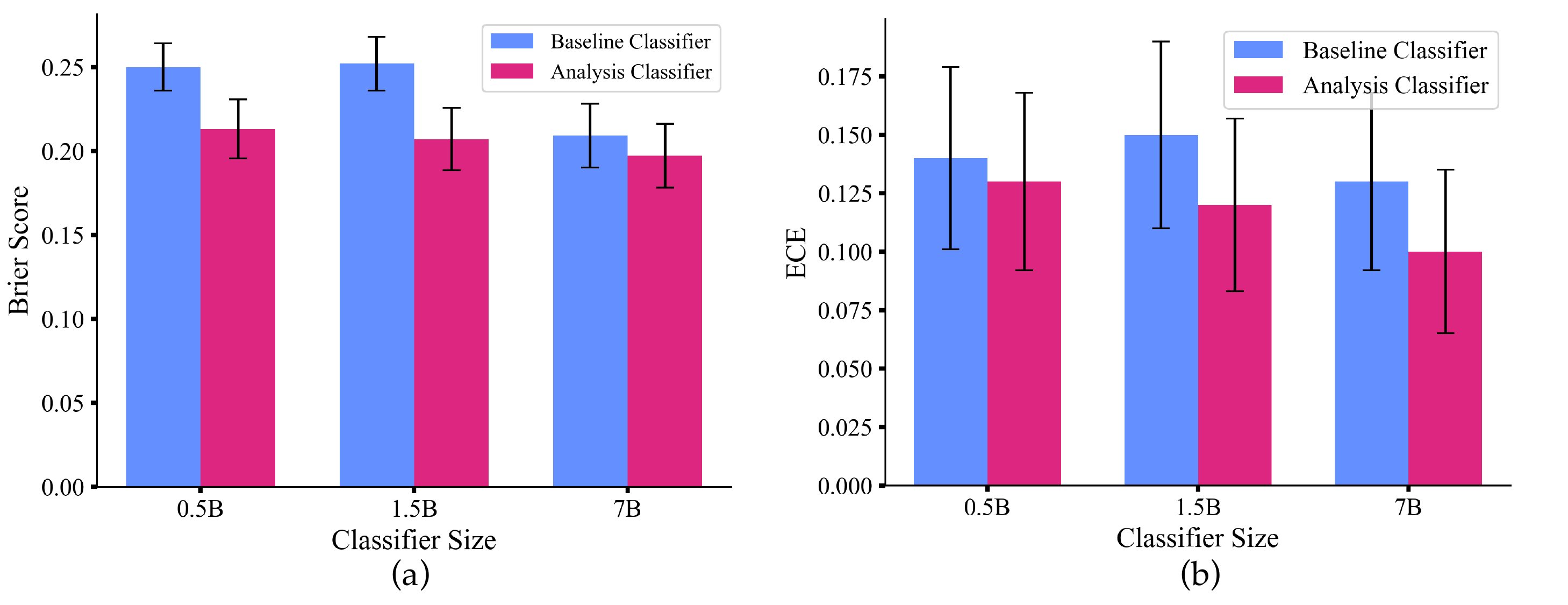}
\caption{\textbf{Brier scores (a) and ECE (b) of baseline / analysis classifiers on HotPotQA-Modified across three model sizes.} Analysis classifiers outperform baselines at smaller sizes, suggesting that uncertainty CoT is essential for better calibration when capacity is limited.}
\label{fig:classifiers}
\end{figure}


\newpage
\section{Confidence Distributions}
\label{app:confidence-distributions}

\subsection{Confidence Distributions Across Different Inputs}
\begin{figure}[h!]
    \centering

    \begin{subfigure}{0.32\textwidth}
        \centering
        \caption{RLCR}
        \includegraphics[width=\linewidth]{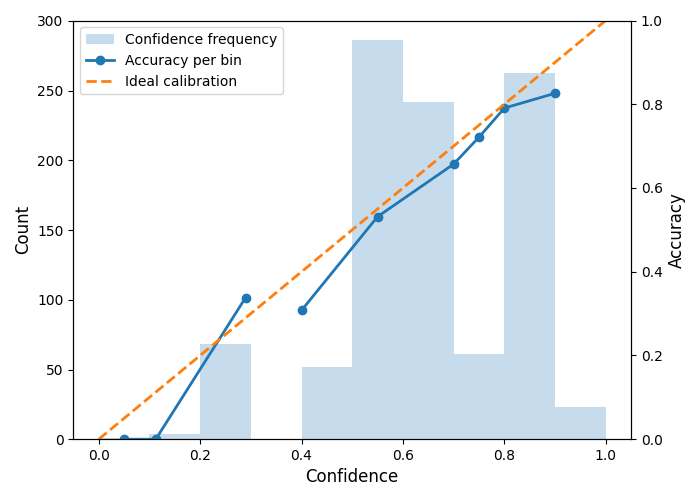}
        \label{fig:id_rlcr_ece_bins}
    \end{subfigure}%
    \hfill
    \begin{subfigure}{0.32\textwidth}
        \centering
        \caption{RLVR with Analysis}
        \includegraphics[width=\linewidth]{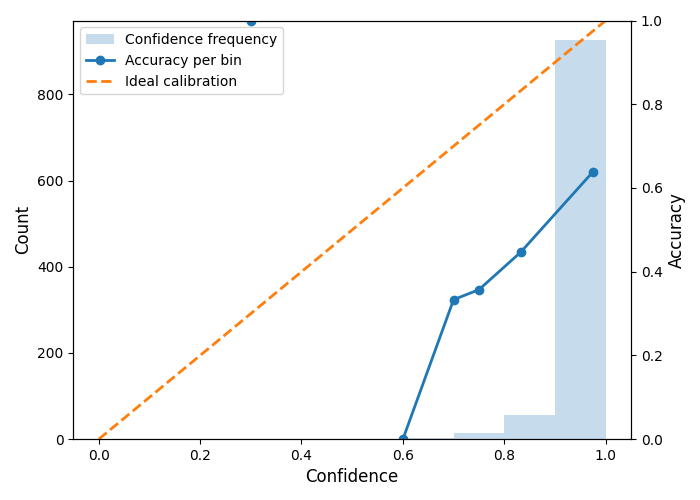}
        \label{fig:id_rlvr_wAnalysis_ece_bins}
    \end{subfigure}%
    \hfill
    \begin{subfigure}{0.32\textwidth}
        \centering
        \caption{RLVR + BCE Conf. Classifier}
        \includegraphics[width=\linewidth]{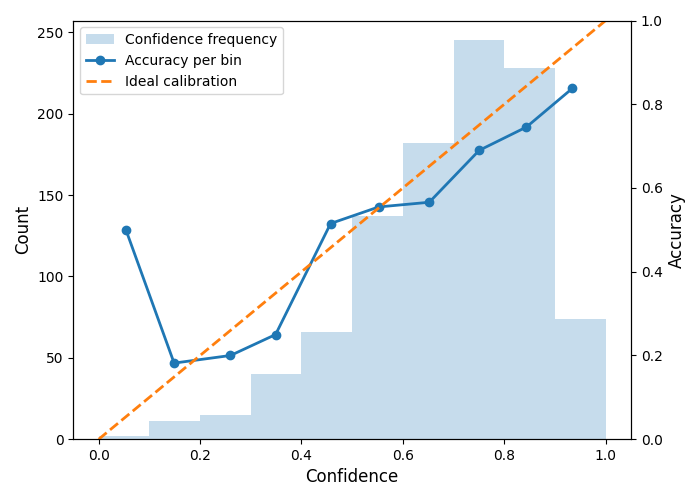}
        \label{fig:id_rlvr_classifier_ece_bins}
    \end{subfigure}

    \caption{In-Distribution HotpotQA — calibration charts overlaid with confidence-frequency histograms for RLCR, RLVR, and RLVR+BCE Confidence Classifier.}
    \label{fig:in-distribution-confidence-distributions}
\end{figure}

Here, we present calibration charts overlaid with confidence–frequency histograms. 
For an ideally calibrated model, the accuracy-calibration curve would lie close to the orange dashed line. 
The histograms directly address whether a model’s confidence values are genuinely diverse or instead clustered around a narrow range.

In Figure \ref{fig:id_rlcr_ece_bins}, we show results for In Distribution evaluation for HotpotQA.
For RLCR, the histogram shows that the model uses the full confidence range, with substantial mass in mid-confidence bins ($0.4$–$0.8$) and non-trivial usage of both lower and higher bins. This indicates that RLCR produces input-dependent confidence scores rather than collapsing toward a single accuracy-like value. The accuracy-per-bin curve also closely tracks the ideal diagonal, demonstrating that these varied confidence levels correspond to meaningful differences in correctness likelihood.

In contrast, RLVR (Figure \ref{fig:id_rlvr_wAnalysis_ece_bins}) concentrates almost all predictions in the $0.9$–$1.0$ range, with very little representation of lower bins. This clustering suggests that RLVR’s outputs are largely uniform and overconfident, and that its confidence does not meaningfully vary across inputs.

Finally, when we use an RLVR-trained model as the base for a BCE confidence classifier (Figure \ref{fig:id_rlvr_classifier_ece_bins}), the accuracy-per-bin curve aligns more closely with the ideal line compared to the RLVR plots, suggesting that the classifier is also effective. 
However, it still underperforms RLCR.

\begin{figure}[h!]
    \centering
    \begin{subfigure}[b]{0.31\textwidth}
        \centering
        \includegraphics[width=\linewidth]{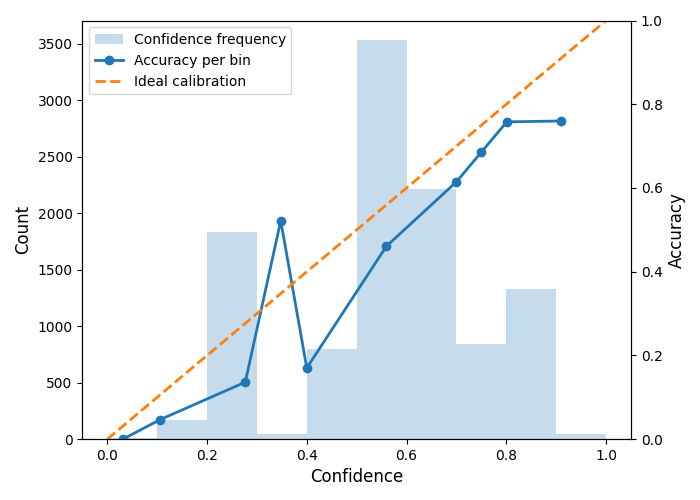}
        \caption{RLCR}
        \label{fig:ood_rlcr_ece_with_bins}
    \end{subfigure}%
    \hspace{0.01\textwidth}
    \begin{subfigure}[b]{0.31\textwidth}
        \centering
        \includegraphics[width=\linewidth]{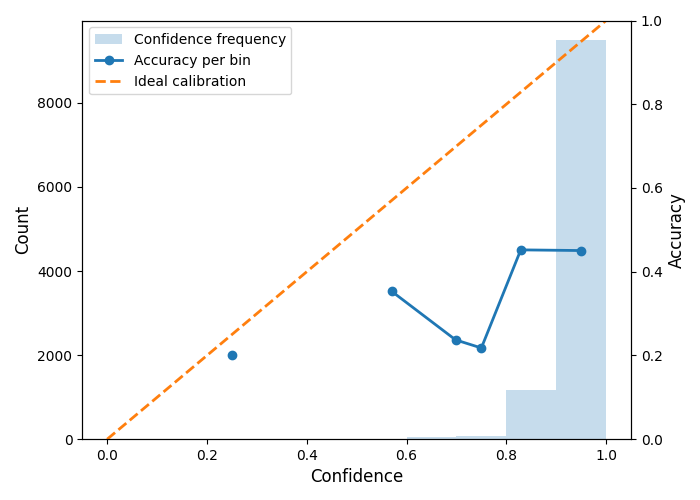}
        \caption{RLVR+Analysis}
        \label{fig:ood_rlvr_wAnalysis_ece_with_bins}
    \end{subfigure}%
    \hspace{0.01\textwidth}
    \begin{subfigure}[b]{0.31\textwidth}
        \centering
        \includegraphics[width=\linewidth]{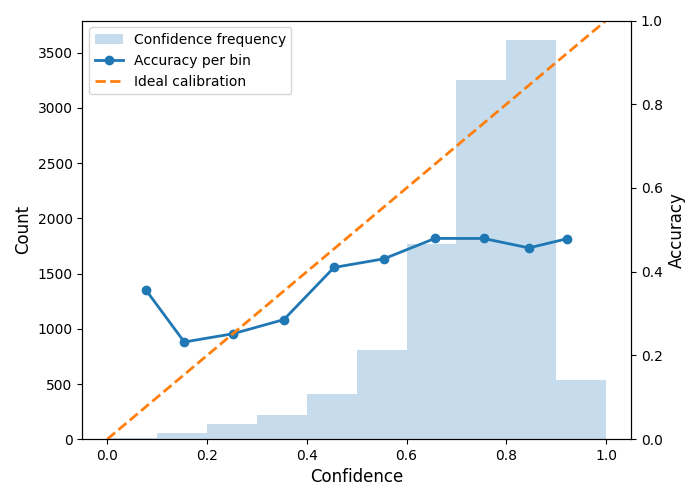}
        \caption{RLVR+BCE Classifier}
        \label{fig:ood_rlvr_BCEclassifier_ece_with_bins}
    \end{subfigure}

    \caption{Out-of-Distribution averaged over 6 datasets — calibration charts overlaid with confidence-frequency histograms for RLCR, RLVR, and RLVR+BCE Confidence Classifier.}
    \label{fig:ood-confidence-distributions}
\end{figure}

We observe a similar pattern in our out-of-distribution evaluations. Results are shown in Figure \ref{fig:ood-confidence-distributions}. RLVR (Figure \ref{fig:ood_rlvr_wAnalysis_ece_with_bins}) continues to produce confidence values that collapse to $1.0$ regardless of accuracy, and its accuracy-per-bin curve deviates substantially from ideal calibration—for example, predictions with verbalized confidences of $0.8$–$0.9$ achieve only around $0.3$ accuracy on average. Training a BCE confidence classifier on top of the RLVR model (Figure \ref{fig:ood_rlvr_BCEclassifier_ece_with_bins}) mitigates this behavior to some extent, yielding a \textit{slightly} more distributed spread of confidence values and an accuracy-per-bin curve that moves a bit closer to the ideal line. However, RLCR (Figure \ref{fig:ood_rlcr_ece_with_bins}) still performs best: it exhibits a healthy spread of confidence values across the full range ($0$–$1$) and an accuracy-per-bin curve that remains much closer to ideal calibration than either RLVR or its BCE classifier variant.

\subsection{Per Input Confidence Distribution}
\begin{figure}[h!]
    \vspace{-1em}
    \centering
    \includegraphics[width=0.7\textwidth]{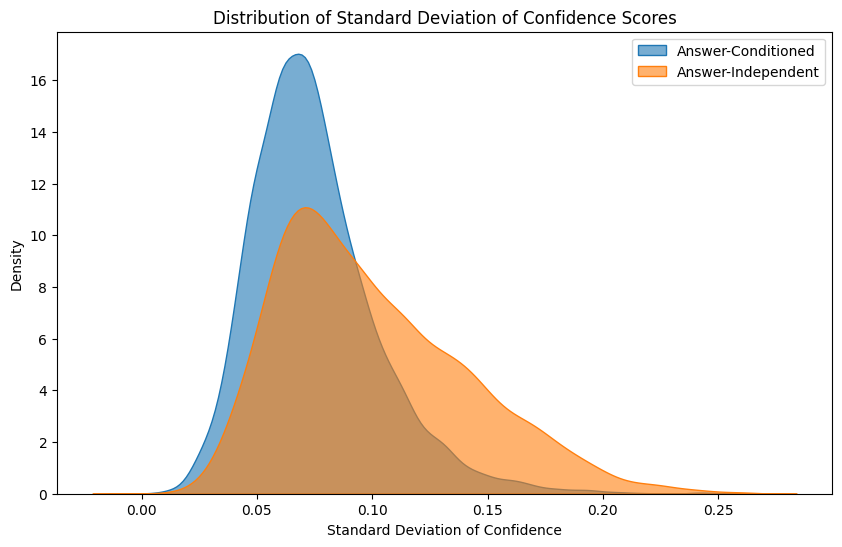}
    \caption{Distribution of Standard Deviation of Answer-Conditioned and Answer-Independent confidence scores}
    \label{fig:within_question_conf_distribution_fig}
\end{figure}

To analyze how the model’s confidence values are distributed for a given question, we conduct an analysis of per-input confidence variability. For each question, we compute the standard deviation of the model’s confidence scores across sampled generations, and the plot shows the distribution of these per-question standard deviations.

We plot two distributions:

\begin{enumerate}
    \item \textbf{Answer-Conditioned Distribution:} $p(c | x, y, a)$. For a given question, solution, and final answer, we sample $N = 16$ analysis/CoT trajectories and compute the standard deviation of their confidence scores. This reflects how stable the model’s confidence is for a given reasoning trajectory and answer. Intuitively, a high standard deviation here would be undesirable, as it would indicate that the model is internally inconsistent or “confused” about its confidence for a given solution. 

    \item \textbf{Answer-Independent Distribution:} $p(c | x)$.We sample N = 16 full CoTs, each with unique reasoning paths. These trajectories correspond to a variety of distinct reasoning paths (and may include multiple different answers). We then compute the standard deviation of the confidences over these trajectories for the same question, capturing how confidence varies across different solutions produced by the model for a particular question.

\end{enumerate}

As shown in Figure \ref{fig:within_question_conf_distribution_fig}, the answer-independent distribution (orange) has significantly more mass to the right, indicating noticeably higher variability in confidence across different reasoning trajectories. In contrast, the answer-conditioned distribution (blue) is narrower - reflecting reduced noise once the answer is fixed - but importantly, it still displays a non-trivial spread rather than collapsing near zero.

Together, these findings show that when generating multiple reasoning trajectories for the same question, the confidence scores have noticeably higher variance compared to generating multiple confidence scores for a fixed reasoning trajectory. 


\newpage

\section{Comparison to RL-Calibration Baselines}

\begin{table}[h!]
    \renewcommand{\arraystretch}{1.3}
    \centering
    \scalebox{0.87}{
    \begin{tabular}{l|cccc|cccc}
        \toprule
        
        \textbf{Method} 
        & \multicolumn{4}{c|}{\textbf{HotpotQA}} 
        & \multicolumn{4}{c}{\textbf{O.O.D Averaged}} \\
        \cmidrule(lr){2-5} \cmidrule(lr){6-9}
        & \textbf{Acc.} & \textbf{AUROC} & \textbf{Brier} & \textbf{ECE}
        & \textbf{Acc.} & \textbf{AUROC} & \textbf{Brier} & \textbf{ECE} \\
        \midrule
        Calibration RL (Full Seq) 
        & 0.00 & N.A & \textbf{0.00} & \textbf{0.00} 
         & 0.00 & N.A & \textbf{0.00} & \textbf{0.00}  \\
        Calibration RL (Analysis + Conf) 
        & 62.0\% & 0.53 & 0.24 & 0.08
         & 52.8\% & 0.57 & 0.27 & 0.25 \\
        Abstention-RL 
        & 62.1\% & 0.61 & 0.32 & 0.31 
         & 54\% & 0.59 & 0.35 & 0.35 \\
         RLVR &\textbf{63.0}\% & 0.50 & 0.37 & 0.37
     & 53.9\% & 0.50 & 0.46 & 0.46 \\
        
        RLCR (\textbf{ours}) & 62.1\% & \textbf{0.69} & 0.21 & 0.03
                             & \textbf{56.2}\% & \textbf{0.68} & 0.21 & 0.21 \\
        \bottomrule
    \end{tabular}}
    \caption{\textbf{Results of RL-calibration baselines}. 
    All baselines significantly outperform RLVR on calibration, but underperform against RLCR on both accuracy and calibration. Training only with calibration rewards leads to degenerate solutions. This collapse can be avoided by masking loss over the the answer and think tokens. Training models to abstain never explicitly teaches confidence estimation and can also suppress exploration.  }
    \label{tab:app-rl-baselines}
\end{table}

\paragraph{Baselines.}
We compare RLCR against carefully adapted variants of the most relevant RL-for-calibration baselines. Several prior approaches focus primarily on calibration and do not explicitly optimize accuracy. To ensure fair comparison, we adapt these methods to the reasoning setting by initializing these baselines from our RLVR model, which has already been trained for accuracy.

\begin{enumerate}
    \item \textbf{Calibration RL (Full-Sequence)}  
    \citep{stangel2025rewarding,xu2024sayself}: We train a variant that optimizes only the Brier-score reward, without any accuracy-dependent term. We apply the loss over the entire generation (\texttt{think}, \texttt{answer}, \texttt{analysis}, \texttt{confidence}). This represents the most direct application of a proper-scoring-rule reward to the reasoning-LLM setting. We initialize from the RLVR model. 

    \item \textbf{Calibration RL (Analysis+Confidence Only).}  
    Prior work \citep{stangel2025rewarding} demonstrates that applying calibration rewards only to the analysis and confidence portions can stabilize training and preserve task accuracy. We therefore implement a stronger, accuracy-preserving variant of the above baseline by restricting the reward to the analysis and confidence spans while taking no loss over the thinking/answer. Similar to the above baseline, we initialize from the RLVR model.

    \item \textbf{Abstention-RL} \citep{wei2025truthrl,mohamadi2025honestyaccuracytrustworthylanguage,song2025hallucination}: Recent papers propose ternary rewards that give $+1$ for correct answers, $0$ for incorrect answers, and an intermediate reward $\lambda$ for explicit abstentions (e.g., “I don’t know”). We adopt this family of methods with $\lambda=0.5$, a standard midpoint value. Because abstention models do not produce confidence scores, we evaluate calibration by using a test-time prompt that instructs the model to \emph{never} abstain. As this method directly optimizes for accuracy as well, we initialize from the standard Qwen2.5-7B model. 
\end{enumerate}

\paragraph{Results.}
\cref{tab:app-rl-baselines} reports the full results. All baselines significantly outperform vanilla RLVR on calibration, but underperform against RLCR on all metrics. 

\textit{Calibration RL (Full-Sequence)} collapses to near-zero accuracy: with no reward shaping for correctness, the model rapidly converges to a degenerate but reward-maximizing behavior, outputting empty or trivial answers with confidence~0. This yields perfect calibration under the Brier score and is essentially the optimal policy with a calibration-only reward.
While KL-regularization can potentially reduce this collapse, there is constant pressure to reduce task accuracy with this variant. 


\textit{Calibration RL (Analysis+Confidence Only)} prevents collapse and maintains accuracy comparable to RLCR, but calibration remains noticeably weaker. We hypothesize two contributing factors. First, the RLVR models might not be good starting points for further RL optimization. They might have reduced entropy or a very different output distribution compared to the base model. Second, jointly optimizing accuracy and calibration, as RLCR does, might provide complementary gradient signals that reinforce one another and enable more effective learning of confidence estimation.

\textit{Abstention-RL} underperforms RLCR on calibration. Because the abstention reward only teaches whether the model’s internal confidence exceeds the threshold $\lambda$, it never learns fine-grained confidence estimation. Moreover, rewarding abstention can also suppress exploration: once the model learns to abstain on difficult questions, it may no longer attempt them, limiting both reasoning improvement and calibration learning. Abstention rewards are better suited for settings where the primary goal is not to improve accuracy, but rather teach model the skill to abstain. 

\section{Analyzing SFT+RLCR}

\begin{table}[h!]
    \renewcommand{\arraystretch}{1.3}
    \centering
    \scalebox{0.87}{
    \begin{tabular}{l|cccc|cccc}
        \toprule
        
        \textbf{Method} 
        & \multicolumn{4}{c|}{\textbf{HotpotQA}} 
        & \multicolumn{4}{c}{\textbf{O.O.D Averaged}} \\
        \cmidrule(lr){2-5} \cmidrule(lr){6-9}
        & \textbf{Acc.} & \textbf{AUROC} & \textbf{Brier} & \textbf{ECE}
        & \textbf{Acc.} & \textbf{AUROC} & \textbf{Brier} & \textbf{ECE} \\
        \midrule
        Base & 56.1\% & 0.56 & 0.40 & 0.39
         & 47.8\% & 0.53 & 0.46 & 0.45 \\
    \rlvf & \textbf{72.9\%} & 0.47 & 0.28 & 0.26
         & \textbf{52.5\%} & 0.52 & 0.49 & 0.49 \\
    \rlcf  & 72.7\% & 0.67 & 0.17 & 0.10
                          & 50.9\% & 0.60 & 0.28 & 0.25 \\
    SFT+\rlcf (original) & 72.2\% & \textbf{0.78} & \textbf{0.14} & \textbf{0.08}
                              & 43.8\% & \textbf{0.66} & \textbf{0.24} & \textbf{0.18} \\
    SFT+\rlcf (tweaked prompt) & 72.5\% & 0.75 & 0.15 & 0.09
                              & 49.8\% & 0.62 & 0.25 & 0.21 \\
        \bottomrule
    \end{tabular}}
    \caption{\textbf{Results of SFT+RLCR with simple prompt change}. Adding a single line to the prompt boosts O.O.D accuracy from 43.8\% to 48\%.   }
    \label{tab:app-forgetting}
\end{table}

The results in \cref{tab:stacked_tables_joint_caption} showed that our SFT+RLCR model experienced a substantial drop in O.O.D.\ accuracy. 
While RLVR and vanilla RLCR achieve O.O.D.\ accuracies of 52.5\% and 50.9\% respectively, SFT+RLCR reaches only 43.8\%. 
To investigate this degradation, we manually examined SFT+RLCR’s generations and uncovered a consistent abnormality unique to this model: the model often identifies the correct answer during its reasoning, but then places an incorrect, seemingly random number inside the answer tags. 
This behavior suggests that extended RL training on Math-heavy data may induce overfitting or domain-specific bias in how the model formats its final answer.

To test whether this issue reflects catastrophic forgetting or a more superficial misalignment, we reran the evaluation with minor modifications to the prompt. 
We added a single clarifying instruction:

\emph{``Be careful about what you put in the answer tags. Do not arbitrarily put numbers there if the question has nothing to do with Math.''} 

Remarkably, this simple change improves O.O.D.\ accuracy from 43.8\% to 49.8\%, as shown in \cref{tab:app-forgetting}. 
This indicates that the degradation is not solely due to catastrophic forgetting; rather, most of the failure arises from formatting biases learned during Math-focused RL training, which are straightforward to reverse.

We hypothesize that introducing a small amount of KL regularization, or training on a more diverse RL dataset beyond Math, would mitigate these effects. 
Prior work has also observed that SFT can induce more forgetting than RL~\citep{shenfeld2025rl,mukherjee2025reinforcement}, and the remaining performance gap may indeed reflect residual forgetting—but to a much lesser extent than we initially suspected.

\newpage 

\section{HotPot Training Results}

\cref{fig:train-results} shows the training curves for \rlcf and \rlvf.
Both the correctness and calibration reward for \rlcf increase smoothly, indicating that the model is able to jointly improve accuracy and calibration. 
Notably, the completion lengths of our method gradually increase during training as uncertainty reasoning improves.


\begin{figure}[h!]
\centering
\includegraphics[width=\textwidth]{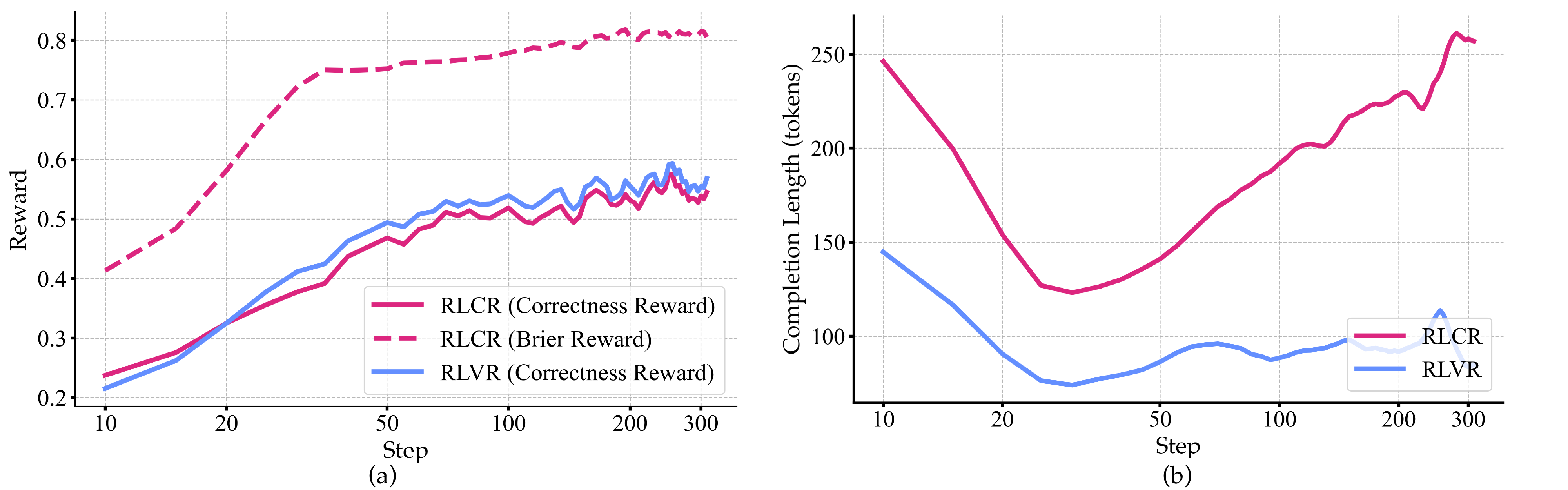}
\caption{\textbf{(a)} \textbf{Reward curves for \emph{\rlcf} (ours) and \emph{\rlvf}}. Both correctness and calibration rewards improve under our method, demonstrating simultaneous gains in correctness and calibration. The Brier reward is shifted upward by 1 for clarity. \textbf{(b) Completion lengths during training.} The completion lengths of our method gradually increase during training as uncertainty reasoning improves.}
\label{fig:train-results}
\end{figure}

\newpage

\label{app:each-dataset-results}
\section{Full Results: Models trained on HotpotQA}

\paragraph{Discussion}
Below we present the full dataset-specific results. 
Although RLCR outperforms baselines on average, there can be considerable variance in calibration on individual O.O.D datasets. 
A particularly illustrative case is CommonsenseQA, where all methods achieve roughly 90\% accuracy.
RLVR is highly overconfident and consistently predicts 85–100\% confidence across all questions and datasets. 
On CommonsenseQA this overconfidence happens to coincide with the dataset’s high accuracy, producing deceptively strong calibration. Importantly, this alignment is a spurious correlation arising from RLVR’s uniformly inflated confidence rather than genuine uncertainty modeling.

At the same time, we acknowledge that all methods, including RLCR, have significant room for improvement in O.O.D.\ calibration. We believe that extending RL training and incorporating a more diverse training dataset can further strengthen robustness and calibration.

\begin{table}[h!]
    \renewcommand{\arraystretch}{1.3}
    \centering
    \footnotesize
     \scalebox{0.87}{
    \begin{tabular}{l|cccc|cccc}
        \toprule
        \textbf{Method} 
        & \multicolumn{4}{c|}{\textbf{SimpleQA}} 
        & \multicolumn{4}{c}{\textbf{Trivia}} \\
        \cmidrule(lr){2-5} \cmidrule(lr){6-9}
        & \textbf{Acc.} & \textbf{AUROC} & \textbf{Brier} & \textbf{ECE}
        & \textbf{Acc.} & \textbf{AUROC} & \textbf{Brier} & \textbf{ECE} \\
        \midrule
        Base & \textbf{13.5\% $\pm$ 1.0} & 0.50 $\pm$ 0.01 & 0.77 $\pm$ 0.01 & 0.81
             & 57.8\% $\pm$ 2.2 & 0.51 $\pm$ 0.01 & 0.38 $\pm$ 0.02 & 0.37 \\
        RLVR & \multirow{5}{*}{\accpipe{12.4\% $\pm$ 1.0}} & 0.50 $\pm$ 0.00 & 0.88 $\pm$ 0.01 & 0.88
             & \multirow{5}{*}{\accpipe{\textbf{62.2\% $\pm$ 2.1}}} & 0.50 $\pm$ 0.00 & 0.38 $\pm$ 0.02 & 0.38 \\
        RLVR+BCE Classifier &  & 0.48 $\pm$ 0.03 & 0.53 $\pm$ 0.01 & 0.64
                        &  & 0.57 $\pm$ 0.03 & 0.26 $\pm$ 0.01 & 0.15 \\
        RLVR+Brier Classifier &  & \textbf{0.60 $\pm$ 0.03} & \textbf{0.11 $\pm$ 0.01} & \textbf{0.06}
                              &  & 0.57 $\pm$ 0.03 & 0.37 $\pm$ 0.01 & 0.37 \\
        RLVR+Probe &  & 0.51 $\pm$ 0.03 & 0.14 $\pm$ 0.01 & 0.12
                   &  & 0.47 $\pm$ 0.03 & 0.43 $\pm$ 0.01 & 0.41 \\
        Answer Probability &  & 0.42 $\pm$ 0.03 & 0.83 $\pm$ 0.01 & 0.85
                    &  & 0.50 $\pm$ 0.03 & 0.37 $\pm$ 0.02 & 0.36 \\
        RLCR (\textbf{ours}) & 12.1\% $\pm$ 1.0 & \textbf{0.60 $\pm$ 0.02} & 0.24 $\pm$ 0.01 & 0.34
                             & 60.8\% $\pm$ 2.1 & \textbf{0.73 $\pm$ 0.03} & \textbf{0.20 $\pm$ 0.01} & \textbf{0.06} \\
        \bottomrule
    \end{tabular}}
    \caption{Performance on SimpleQA and Trivia datasets. Values indicate the mean with error margins given as half-widths of the 95\% bootstrap confidence intervals. Best values bolded.}
    \label{tab:simpleqa_trivia_results}
\end{table}

\begin{table}[h]
    \renewcommand{\arraystretch}{1.3}
    \centering
    \footnotesize
     \scalebox{0.87}{
    \begin{tabular}{l|cccc|cccc}
        \toprule
        \textbf{Method} 
        & \multicolumn{4}{c|}{\textbf{CommonsenseQA}} 
        & \multicolumn{4}{c}{\textbf{GPQA}} \\
        \cmidrule(lr){2-5} \cmidrule(lr){6-9}
        & \textbf{Acc.} & \textbf{AUROC} & \textbf{Brier} & \textbf{ECE}
        & \textbf{Acc.} & \textbf{AUROC} & \textbf{Brier} & \textbf{ECE} \\
        \midrule
        Base & 88.6\% $\pm$ 1.8 & 0.62 $\pm$ 0.05 & 0.10 $\pm$ 0.01 & \textbf{0.00}
             & 39.5\% $\pm$ 4.4 & 0.49 $\pm$ 0.05 & 0.50 $\pm$ 0.04 & 0.51  \\
        RLVR & \multirow{5}{*}{\accpipe{90.8\% $\pm$ 1.6}} & 0.50 $\pm$ 0.00 & 0.09 $\pm$ 0.02 & 0.09 
             & \multirow{5}{*}{\accpipe{39.5\% $\pm$ 4.6}} & 0.50 $\pm$ 0.00 & 0.60 $\pm$ 0.05 & 0.60 \\
        RLVR+BCE Classifier &  & 0.65 $\pm$ 0.06 & 0.12 $\pm$ 0.01 & 0.18 
                            &  & 0.52 $\pm$ 0.05 & 0.28 $\pm$ 0.02 & \textbf{0.16} \\
        RLVR+Brier Classifier &  & 0.65 $\pm$ 0.05 & 0.26 $\pm$ 0.01 & 0.42 
                              &  & 0.52 $\pm$ 0.06 & 0.29 $\pm$ 0.03 & 0.21  \\
        RLVR+Probe &  & 0.50 $\pm$ 0.06 & 0.75 $\pm$ 0.01 & 0.81 
                   &  & 0.50 $\pm$ 0.05 & 0.33 $\pm$ 0.04 & 0.29  \\
        Answer Prob &  & 0.60 $\pm$ 0.06 & \textbf{0.08 $\pm$ 0.01} & 0.03 
                    &  & 0.53 $\pm$ 0.05 & 0.54 $\pm$ 0.04 & 0.54 \\
        RLCR (\textbf{ours}) & \textbf{91.3\% $\pm$ 1.6} & \textbf{0.73 $\pm$ 0.06} & 0.17 $\pm$ 0.01 & 0.30 
                             & \textbf{41.5\% $\pm$ 4.7} & \textbf{0.55 $\pm$ 0.05} & \textbf{0.27 $\pm$ 0.01} & \textbf{0.16 } \\
        \bottomrule
    \end{tabular}}
    \caption{Performance on CommonsenseQA and GPQA datasets. Values indicate the mean with error margins given as half-widths of the 95\% bootstrap confidence intervals. Best values bolded.}
    \label{tab:commonsense_gpqa_results}
\end{table}

\begin{table}[h]
    \renewcommand{\arraystretch}{1.3}
    \centering
    \footnotesize
     \scalebox{0.87}{
    \begin{tabular}{l|cccc|cccc}
        \toprule
        \textbf{Method} 
        & \multicolumn{4}{c|}{\textbf{MATH-500}} 
        & \multicolumn{4}{c}{\textbf{GSM8K}} \\
        \cmidrule(lr){2-5} \cmidrule(lr){6-9}
        & \textbf{Acc.} & \textbf{AUROC} & \textbf{Brier} & \textbf{ECE}
        & \textbf{Acc.} & \textbf{AUROC} & \textbf{Brier} & \textbf{ECE} \\
        \midrule
        Base & 46.8\% $\pm$ 4.30 & 0.59 $\pm$ 0.05 & 0.48 $\pm$ 0.04 & 0.49 
             & 73.5\% $\pm$ 2.50 & 0.52 $\pm$ 0.03 & 0.24 $\pm$ 0.02 & 0.22  \\
        \rlvf & \multirow{5}{*}{\accpipe{38.6\% $\pm$ 4.20}} & 0.50 $\pm$ 0.01 & 0.61 $\pm$ 0.04 & 0.61 
             & \multirow{5}{*}{\accpipe{80.1\% $\pm$ 2.10}} & 0.50 $\pm$ 0.00 & 0.20 $\pm$ 0.02 & 0.20 \\
        \rlvf+ Classifier &  & 0.70 $\pm$ 0.05 & 0.26 $\pm$ 0.02 & 0.22 
                          &  & 0.57 $\pm$ 0.04 & 0.16 $\pm$ 0.02 & \textbf{0.07 } \\
        \rlvf+ Brier Classifier &  & 0.58 $\pm$ 0.05 & 0.32 $\pm$ 0.03 & 0.29 
                                &  & 0.66 $\pm$ 0.04 & 0.55 $\pm$ 0.02 & 0.63  \\
        \rlvf+ Probe &  & 0.67 $\pm$ 0.05 & \textbf{0.24 $\pm$ 0.03} & \textbf{0.15 }
                     &  & 0.53 $\pm$ 0.04 & 0.41 $\pm$ 0.01 & 0.48  \\
        Answer Probability &  & \textbf{0.79 $\pm$ 0.05} & 0.51 $\pm$ 0.04 & 0.55
                           &  & \textbf{0.78 $\pm$ 0.02} & 0.16 $\pm$ 0.02 & 0.17  \\
        \rlcf (\textbf{ours}) & \textbf{45.4\% $\pm$ 4.50} & 0.72 $\pm$ 0.05 & 0.25 $\pm$ 0.02 & 0.19 
                             & \textbf{86.3\% $\pm$ 1.75} & 0.74 $\pm$ 0.04 & \textbf{0.14 $\pm$ 0.01} & 0.20 \\
        \bottomrule
    \end{tabular}}
    \caption{Performance on Math-500 and GSM8K datasets. Values indicate the mean with error margins given as half-widths of the 95\% bootstrap confidence intervals. Best values bolded.}
    \label{tab:ap_dual_dataset_results}
\end{table}

\begin{table}[h!]
    \renewcommand{\arraystretch}{1.3}
    \centering
    \footnotesize
     \scalebox{0.87}{
    \begin{tabular}{l|cccc|cccc}
        \toprule
        \textbf{Method} 
        & \multicolumn{4}{c|}{\textbf{HotpotQA}} 
        & \multicolumn{4}{c}{\textbf{HotpotQA-Modified}} \\
        \cmidrule(lr){2-5} \cmidrule(lr){6-9}
        & \textbf{Acc.} & \textbf{AUROC} & \textbf{Brier} & \textbf{ECE}
        & \textbf{Acc.} & \textbf{AUROC} & \textbf{Brier} & \textbf{ECE} \\
        \midrule
        Base & 39.7\% $\pm$ 3.10 & 0.54 $\pm$ 0.02 & 0.53 $\pm$ 0.03 & 0.53
             & 30.4\% $\pm$ 4.30 & 0.59 $\pm$ 0.04 & 0.57 $\pm$ 0.04 & 0.59 \\
        \rlvf & \multirow{5}{*}{\accpipe{63.0\% $\pm$ 3.05}} & 0.50 $\pm$ 0.00 & 0.37 $\pm$ 0.03 & 0.37
             & \multirow{5}{*}{\accpipe{46.0\% $\pm$ 4.30}} & 0.50 $\pm$ 0.00 & 0.54 $\pm$ 0.05 & 0.54 \\
        \rlvf+BCE Classifier &  & 0.66 $\pm$ 0.04 & 0.22 $\pm$ 0.01 & 0.07
                        &  & 0.77 $\pm$ 0.05 & 0.21 $\pm$ 0.02 & 0.13 \\
        \rlvf+Brier Classifier &  & 0.65 $\pm$ 0.04 & 0.22 $\pm$ 0.02 & 0.09
                              &  & 0.79 $\pm$ 0.05 & 0.20 $\pm$ 0.02 & 0.12 \\
        \rlvf+Probe &  & 0.55 $\pm$ 0.04 & 0.24 $\pm$ 0.01 & 0.10
                   &  & 0.57 $\pm$ 0.05 & 0.26 $\pm$ 0.01 & 0.12 \\
        Answer Prob &  & \textbf{0.72 $\pm$ 0.04} & 0.36 $\pm$ 0.03 & 0.36
                    &  & 0.61 $\pm$ 0.05 & 0.52 $\pm$ 0.04 & 0.53 \\
        \rlcf (\textbf{ours}) & 62.1\% $\pm$ 3.05 & 0.69 $\pm$ 0.04 & \textbf{0.21 $\pm$ 0.01} & \textbf{0.03}
                             & 44.4\% $\pm$ 4.20 & \textbf{0.80 $\pm$ 0.05} & \textbf{0.19 $\pm$ 0.02} & \textbf{0.08} \\
        \bottomrule
    \end{tabular}}
    \caption{Performance on HotpotQA and HotpotQA-Modified datasets. Values indicate the mean with error margins given as half-widths of the 95\% bootstrap confidence intervals. Best values bolded.}
    \label{tab:ap_dual_dataset_results}
\end{table}

\newpage

\section{Full Results: Models trained on Math} 

\begin{table}[h]
    \renewcommand{\arraystretch}{1.3}
    \centering
    \footnotesize
     \scalebox{0.87}{
    \begin{tabular}{l|cccc|cccc}
        \toprule
        \textbf{Method} 
        & \multicolumn{4}{c|}{\textbf{SimpleQA}} 
        & \multicolumn{4}{c}{\textbf{TriviaQA}} \\
        \cmidrule(lr){2-5} \cmidrule(lr){6-9}
        & \textbf{Acc.} & \textbf{AUROC} & \textbf{Brier} & \textbf{ECE}
        & \textbf{Acc.} & \textbf{AUROC} & \textbf{Brier} & \textbf{ECE} \\
        \midrule
        Base & 13.5\% $\pm$ 1.01 & 0.50 $\pm$ 0.01 & 0.77 $\pm$ 0.01 & 0.81
             & 57.8\% $\pm$ 2.25 & 0.51 $\pm$ 0.01 & 0.38 $\pm$ 0.02 & 0.37 \\
        RLVR & \multirow{5}{*}{\accpipe{\textbf{15.2\% $\pm$ 1.05}}} & 0.53 $\pm$ 0.02 & 0.83 $\pm$ 0.01 & 0.84
             & \multirow{5}{*}{\accpipe{58.3\% $\pm$ 2.20}} & 0.50 $\pm$ 0.02 & 0.43 $\pm$ 0.02 & 0.43 \\
        RLVR+BCE Classifier &  & 0.45 $\pm$ 0.02 & 0.57 $\pm$ 0.01 & 0.64
                            &  & 0.57 $\pm$ 0.03 & 0.29 $\pm$ 0.01 & 0.22 \\
        RLVR+Brier Classifier &  & 0.49 $\pm$ 0.02 & \textbf{0.15 $\pm$ 0.01} & \textbf{0.11}
                              &  & 0.61 $\pm$ 0.03 & 0.30 $\pm$ 0.01 & 0.25 \\
        RLVR+Probe &  & 0.44 $\pm$ 0.02 & 0.58 $\pm$ 0.01 & 0.66
                   &  & 0.56 $\pm$ 0.03 & 0.30 $\pm$ 0.01 & 0.23 \\
        Answer Prob & & 0.45 $\pm$ 0.02 & 0.80 $\pm$ 0.01 & 0.81
                    &  & 0.48 $\pm$ 0.02 & 0.40 $\pm$ 0.02 & 0.38 \\
        RLCR (\textbf{ours}) & 12.0\% $\pm$ 0.95 & 0.52 $\pm$ 0.02 & 0.43 $\pm$ 0.01 & 0.54
                             & \textbf{61.0\% $\pm$ 2.13} & 0.67 $\pm$ 0.02 & 0.22 $\pm$ 0.01 & 0.10 \\
        RLCR+SFT (\textbf{ours}) & 11.4\% $\pm$ 0.94 & \textbf{0.60 $\pm$ 0.03} & 0.29 $\pm$ 0.01 & 0.40
                             & 55.6\% $\pm$ 2.20 & \textbf{0.72 $\pm$ 0.02} & \textbf{0.21 $\pm$ 0.01} & \textbf{0.06} \\
        \bottomrule
    \end{tabular}}
    \caption{Performance on SimpleQA and TriviaQA datasets. Values indicate the mean with error margins given as half-widths of the 95\% bootstrap confidence intervals. Best values bolded.}
    \label{tab:simple_trivia_results}
\end{table}

\begin{table}[h!]
    \renewcommand{\arraystretch}{1.3}
    \centering
    \footnotesize
     \scalebox{0.87}{
    \begin{tabular}{l|cccc|cccc}
        \toprule
        \textbf{Method} 
        & \multicolumn{4}{c|}{\textbf{CommonsenseQA}} 
        & \multicolumn{4}{c}{\textbf{GPQA}} \\
        \cmidrule(lr){2-5} \cmidrule(lr){6-9}
        & \textbf{Acc.} & \textbf{AUROC} & \textbf{Brier} & \textbf{ECE}
        & \textbf{Acc.} & \textbf{AUROC} & \textbf{Brier} & \textbf{ECE} \\
        \midrule
        Base & 88.6\% $\pm$ 1.76 & 0.62 $\pm$ 0.05 & \textbf{0.10 $\pm$ 0.01} & \textbf{0.00}
             & 39.5\% $\pm$ 4.35 & 0.49 $\pm$ 0.05 & 0.50 $\pm$ 0.04 & 0.51 \\
        RLVR & \multirow{5}{*}{\accpipe{89.3\% $\pm$ 1.88}} & 0.55 $\pm$ 0.02 & 0.13 $\pm$ 0.02 & 0.13
             & \multirow{5}{*}{\accpipe{\textbf{50.0\% $\pm$ 4.91}}} & 0.50 $\pm$ 0.02 & 0.53 $\pm$ 0.04 & 0.53 \\
        RLVR+BCE Classifier &  & 0.61 $\pm$ 0.05 & 0.23 $\pm$ 0.01 & 0.34
                        &  & 0.51 $\pm$ 0.05 & 0.33 $\pm$ 0.03 & 0.27 \\
        RLVR+Brier Classifier &  & 0.60 $\pm$ 0.05 & 0.30 $\pm$ 0.01 & 0.45
                              &  & 0.53 $\pm$ 0.05 & 0.33 $\pm$ 0.03 & 0.28 \\
        RLVR+Probe &  & 0.57 $\pm$ 0.05 & 0.19 $\pm$ 0.01 & 0.28
                   &  & 0.50 $\pm$ 0.05 & 0.30 $\pm$ 0.02 & 0.19 \\
        Answer Prob &  & 0.56 $\pm$ 0.05 & 0.10 $\pm$ 0.02 & 0.09
                    &  & 0.53 $\pm$ 0.06 & 0.44 $\pm$ 0.04 & 0.40 \\
        RLCR (\textbf{ours}) & \textbf{90.1\% $\pm$ 1.68} & 0.62 $\pm$ 0.05 & 0.21 $\pm$ 0.01 & 0.34
                             & 43.3\% $\pm$ 4.46 & 0.57 $\pm$ 0.05 & 0.26 $\pm$ 0.02 & 0.10 \\
        SFT+RLCR (\textbf{ours}) & 77.6\% $\pm$ 2.17 & \textbf{0.73 $\pm$ 0.04} & 0.22 $\pm$ 0.02 & 0.25
                                 & 32.6\% $\pm$ 4.35 & \textbf{0.60 $\pm$ 0.06} & \textbf{0.23 $\pm$ 0.02} & \textbf{0.08} \\
        \bottomrule
    \end{tabular}}
    \caption{Performance on CommonsenseQA and GPQA datasets. Values indicate the mean with error margins given as half-widths of the 95\% bootstrap confidence intervals. Best values bolded.}
    \label{tab:commonsenseqa_gpqa_results}
\end{table}

        

\begin{table}[H]
    \renewcommand{\arraystretch}{1.3}
    \centering
    \footnotesize
     \scalebox{0.87}{
    \begin{tabular}{l|cccc|cccc}
        \toprule
        \textbf{Method} 
        & \multicolumn{4}{c|}{\textbf{MATH-500}} 
        & \multicolumn{4}{c}{\textbf{GSM8K}} \\
        \cmidrule(lr){2-5} \cmidrule(lr){6-9}
        & \textbf{Acc.} & \textbf{AUROC} & \textbf{Brier} & \textbf{ECE}
        & \textbf{Acc.} & \textbf{AUROC} & \textbf{Brier} & \textbf{ECE} \\
        \midrule
        Base & 46.8\% $\pm$ 4.20 & 0.59 $\pm$ 0.05 & 0.48 $\pm$ 0.04 & 0.49
             & 73.5\% $\pm$ 2.46 & 0.52 $\pm$ 0.03 & 0.24 $\pm$ 0.02 & 0.22 \\
        \rlvf & \multirow{5}{*}{\accpipe{59.2\% $\pm$ 4.10}} & 0.45 $\pm$ 0.04 & 0.44 $\pm$ 0.04 & 0.43
             & \multirow{5}{*}{\accpipe{\textbf{90.6\% $\pm$ 1.57}}} & 0.47 $\pm$ 0.05 & 0.09 $\pm$ 0.01 & \textbf{0.05} \\
        \rlvf+BCE Classifier &  & 0.77 $\pm$ 0.05 & 0.22 $\pm$ 0.03 & 0.18
                        &  & \textbf{0.77 $\pm$ 0.05} & \textbf{0.08 $\pm$ 0.01} & 0.06 \\
        \rlvf+Brier Classifier &  & 0.79 $\pm$ 0.05 & 0.22 $\pm$ 0.03 & 0.18
                              &  & 0.76 $\pm$ 0.05 & \textbf{0.08 $\pm$ 0.01} & \textbf{0.05} \\
        \rlvf+Probe &  & 0.67 $\pm$ 0.05 & 0.26 $\pm$ 0.03 & 0.20
                   &  & 0.56 $\pm$ 0.05 & 0.11 $\pm$ 0.01 & 0.14 \\
        Answer Prob & & 0.54 $\pm$ 0.05 & 0.39 $\pm$ 0.04 & 0.39
                    &  & 0.48 $\pm$ 0.02 & 0.09 $\pm$ 0.02 & 0.09 \\
        \rlcf (\textbf{ours}) & \textbf{59.8\% $\pm$ 4.20} & 0.67 $\pm$ 0.04 & 0.23 $\pm$ 0.02 & \textbf{0.12}
                             & 89.6\% $\pm$ 1.71 & 0.63 $\pm$ 0.03 & 0.10 $\pm$ 0.01 & 0.12 \\
        SFT+\rlcf (\textbf{ours}) & 55.8\% $\pm$ 4.40 & \textbf{0.81 $\pm$ 0.05} & \textbf{0.19 $\pm$ 0.03} & 0.16
                             & 90.4\% $\pm$ 1.67 & 0.73 $\pm$ 0.04 & \textbf{0.08 $\pm$ 0.01} & 0.06 \\
        \bottomrule
    \end{tabular}}
    \caption{Performance on MATH-500 and GSM8K datasets. Values indicate the mean with error margins given as half-widths of the 95\% bootstrap confidence intervals. Best values bolded.}
    \label{tab:math_gsm_results}
\end{table}


\begin{table}[h!]
    \renewcommand{\arraystretch}{1.3}
    \centering
    \footnotesize
     \scalebox{0.87}{
    \begin{tabular}{l|cccc|cccc}
        \toprule
        \textbf{Method} 
        & \multicolumn{4}{c|}{\textbf{Big-Math Digits}} 
        & \multicolumn{4}{c}{\textbf{HotpotQA}} \\
        \cmidrule(lr){2-5} \cmidrule(lr){6-9}
        & \textbf{Acc.} & \textbf{AUROC} & \textbf{Brier} & \textbf{ECE}
        & \textbf{Acc.} & \textbf{AUROC} & \textbf{Brier} & \textbf{ECE} \\
        \midrule
        Base & 48.1\% $\pm$ 3.10 & 0.56 $\pm$ 0.03 & 0.47 $\pm$ 0.03 & 0.47
             & 39.7\% $\pm$ 3.10 & 0.54 $\pm$ 0.02 & 0.53 $\pm$ 0.03 & 0.53 \\
        RLVR & \multirow{5}{*}{\accpipe{68.8\% $\pm$ 2.90}} & 0.50 $\pm$ 0.03 & 0.32 $\pm$ 0.03 & 0.30
             & \multirow{5}{*}{\accpipe{\textbf{49.7\% $\pm$ 3.15}}} & 0.50 $\pm$ 0.01 & 0.55 $\pm$ 0.03 & 0.55 \\
        RLVR+BCE Classifier &  & \textbf{0.81 $\pm$ 0.04} & 0.16 $\pm$ 0.02 & 0.05
             &  & 0.58 $\pm$ 0.03 & 0.28 $\pm$ 0.01 & 0.20 \\
        RLVR+Brier-Classifier &  & 0.79 $\pm$ 0.04 & 0.17 $\pm$ 0.02 & 0.05
             &  & 0.60 $\pm$ 0.04 & 0.30 $\pm$ 0.02 & 0.24 \\
        RLVR+Probe &  & 0.72 $\pm$ 0.04 & 0.19 $\pm$ 0.02 & 0.07
             &  & 0.58 $\pm$ 0.04 & 0.28 $\pm$ 0.01 & 0.16 \\
        Answer Prob &  & 0.53 $\pm$ 0.04 & 0.30 $\pm$ 0.03 & 0.30
             &  & 0.60 $\pm$ 0.03 & 0.47 $\pm$ 0.03 & 0.47 \\
        RLCR (\textbf{ours}) & 68.7\% $\pm$ 2.75 & 0.73 $\pm$ 0.03 & 0.18 $\pm$ 0.01 & 0.05
             & 48.3\% $\pm$ 3.00 & 0.61 $\pm$ 0.04 & 0.27 $\pm$ 0.01 & 0.18 \\
        RLCR+SFT (\textbf{ours}) & \textbf{70.5\% $\pm$ 2.70} & 0.79 $\pm$ 0.04 & \textbf{0.15 $\pm$ 0.02} & \textbf{0.03}
             & 41.6\% $\pm$ 3.15 & \textbf{0.65 $\pm$ 0.04} & \textbf{0.24 $\pm$ 0.01} & \textbf{0.09} \\
        \bottomrule
    \end{tabular}}
    \caption{Performance on Big-Math Digits and HotpotQA. Values indicate the mean with error margins given as half-widths of the 95\% bootstrap confidence intervals. Best values bolded.}
    \label{tab:math_hotpot_results}
\end{table}

\clearpage

\section*{SimpleQA Example}

\begin{tcolorbox}[colback=black!10!white,colframe=black!60!black,title=\textbf{Question}, fonttitle=\bfseries]
Who was awarded the Oceanography Society's Jerlov Award in 2018? (Ground Truth Answer: Annick Bricaud)
\end{tcolorbox}

\vspace{0.5em}

\begin{tcolorbox}[colback=magenta!10!white,colframe=magenta!60!black,title=\textbf{\rlcf (ours)}, fonttitle=\bfseries]
{\small
\texttt{<think>} The question asks who was awarded the Oceanography Society's Jerlov Award in 2018. The Jerlov Award is given by the Oceanography Society, and it is named after the late Professor Gunnar Jerlov, a pioneer in the field of ocean optics. To find the winner of the 2018 award, I would need to look up the most recent information on the Oceanography Society's website or other reliable sources. \texttt{</think>}

\vspace{0.5em}
\texttt{<answer>} Dr. David W. Schiedek \texttt{</answer>}

\vspace{0.5em}
\texttt{<analysis>} The Oceanography Society's website or other reliable sources do not provide information on the winner of the 2018 Jerlov Award. It is possible that the information is not readily available or that the award was not given in 2018. Additionally, the name "Dr. David W. Schiedek" is not a well-known figure in the field of oceanography, which raises uncertainty about the accuracy of this answer. \texttt{</analysis>}

\vspace{0.5em}
\texttt{<confidence>} 0.25 \texttt{</confidence>}
}
\end{tcolorbox}

\begin{tcolorbox}[colback=blue!5!white,colframe=blue!65!black,title=\textbf{\rlvf}, fonttitle=\bfseries]
{\small
\texttt{<think>} The question asks for the person who was awarded the Oceanography Society's Jerlov Award in 2018. The Jerlov Award is given by the Oceanography Society to recognize outstanding contributions to the field of oceanography. The recipient for 2018 was Dr. Laurence C. Smith. \texttt{</think>}

\vspace{0.5em}
\texttt{<answer>} Laurence C. Smith \texttt{</answer>}

\vspace{0.5em}
\texttt{<confidence>} 100. \texttt{</confidence>}
}
\end{tcolorbox}

\begin{tcolorbox}[colback=orange!5!white,colframe=orange!50!black,title=\textbf{Qwen-7B-Base}, fonttitle=\bfseries]
{\small
\texttt{<think>} To find out who was awarded the Oceanography Society's Jerlov Award in 2018, I will search for the most recent information on the award winners from the Oceanography Society's official website or a reliable source that covers scientific awards. \texttt{</think>}

\vspace{0.5em}
\texttt{<answer>} The Oceanography Society's Jerlov Award in 2018 was awarded to Dr. Lisa Levin. \texttt{</answer>}

\vspace{0.5em}
\texttt{<confidence>} 0.95 \texttt{</confidence>}
}
\end{tcolorbox}

\clearpage
\section*{HotPotQA-Modified Example}
\label{app:hotpot-eg}
All supporting information from the given question has been removed. The only way to answer correctly is if the model remembers the information in its weights.

\begin{tcolorbox}[colback=black!5!white,colframe=black!60!black,title=\textbf{Question and Supporting Information}, fonttitle=\bfseries, sharp corners=southwest, enhanced, breakable]
\textbf{Question:} Jacques Sernas, actor in \textit{Fugitive in Trieste}, was of what nationality? (Ground Truth Answer: Lithuanian-born French)

Your answer will be verified with exact match score. To ensure correct verification, only provide the answer within the \texttt{<answer>} \texttt{</answer>} tags. Do not put any sentences or reasoning process within the \texttt{<answer>} \texttt{</answer>} tags.

\vspace{1em}

\textbf{Supporting Information:}

\textbf{Paragraph 0}

\textit{Man From 1997} is a time travel episode of the 1956–57 anthology television series \textit{Conflict} directed by Roy del Ruth, produced by Roy Huggins, written by James Gunn from a story by Alfred Bester, and starring Jacques Sernas, Charles Ruggles, Gloria Talbott and James Garner.  
The music was written by David Buttolph and the cinematographer was Ted D. McCord.  
The show was originally telecast on November 27, 1956 and a kinescope of the broadcast currently exists.

\vspace{0.5em}

\textbf{Paragraph 1}

\textit{Altair} is a 1956 Italian romantic drama film directed by Leonardo De Mitri and starring Franco Interlenghi, Antonella Lualdi and Jacques Sernas.

\vspace{0.5em}

\textbf{Paragraph 2}

\textit{Dieci canzoni d'amore da salvare} (English title: "Ten Love Songs") is a 1953 Italian film directed by Flavio Calzavara.  
The plot concerns a songwriter, played by Jacques Sernas, who leaves his sweetheart and publisher when he learns that he is going blind.  
Supporting Sernas were Brunella Bovo, Franca Tamantini, and Enrico Viarisio.

\vspace{0.5em}

\textbf{Paragraph 3}

\textit{Loving You Is My Sin} (Italian: \textit{Amarti è il mio peccato}) is a 1953 Italian melodrama film directed by Sergio Grieco and starring Jacques Sernas, Luisa Rossi and Elisa Cegani.

\vspace{0.5em}

\textbf{Paragraph 4}

\textit{It's All Adam's Fault} or \textit{In Six Easy Lessons} (French: \textit{C'est la faute d'Adam}) is a 1958 French comedy film directed by Jacqueline Audry and starring Dany Robin, Jacques Sernas and Mijanou Bardot.

\vspace{0.5em}

\textbf{Paragraph 5}

\textit{Balearic Caper} (Spanish: \textit{"Zarabanda bing bing"}, Italian: \textit{"Baleari Operazione Oro"}, French: \textit{"Barbouze chérie"}, also known as \textit{Operation Gold}) is a 1966 Spanish-Italian-French heist-Eurospy comedy film written and directed by José María Forqué and starring Jacques Sernas, Daniela Bianchi and Mireille Darc.  
It was shot in Ibiza.

\vspace{0.5em}

\textbf{Paragraph 6}

\textit{Finishing School} (Italian: \textit{Fanciulle di lusso}) is a 1953 French-Italian comedy film directed by Bernard Vorhaus and starring Susan Stephen, Anna Maria Ferrero and Jacques Sernas.  
It was made at Cinecittà with sets designed by the art director Franco Lolli.  
It is also known by the alternative title of \textit{Luxury Girls}.

\vspace{0.5em}

\textbf{Paragraph 7}

\textit{Jump into Hell} is a 1955 war film directed by David Butler.  
The film stars Jacques Sernas (billed as "Jack Sernas") and Kurt Kasznar.  
The first contemporary Hollywood war film of the war in Indochina, the story is a fictionalized account of the Battle of Dien Bien Phu.

\end{tcolorbox}

\vspace{0.5em}

\begin{tcolorbox}[colback=magenta!10!white,colframe=magenta!60!black,title=\textbf{\rlcf}, fonttitle=\bfseries]
{\small
\texttt{<think>} The question asks for the nationality of Jacques Sernas, the actor in Fugitive in Trieste. The information provided mentions Jacques Sernas in several films, but does not specify his nationality. However, it does mention that he starred in Altair, a 1956 Italian romantic drama film, and that he was an actor in several French and Italian films. This suggests that Jacques Sernas might be of French or Italian nationality, as he starred in films produced in those countries. However, there is no direct information about his nationality in the given text. \texttt{</think>}

\vspace{0.5em}
\texttt{<answer>} French \texttt{</answer>}

\vspace{0.5em}
\texttt{<analysis>} The information provided mentions Jacques Sernas in several French and Italian films, and he starred in Altair, a 1956 Italian romantic drama film. However, there is no direct information about his nationality. While it is possible that he could be of French or Italian nationality, the lack of specific information makes it uncertain. Additionally, the fact that he starred in both French and Italian films could suggest that he might be of mixed nationality or have connections to both countries. Due to the ambiguity and lack of direct information, the confidence in this answer is relatively low. \texttt{</analysis>}

\vspace{0.5em}
\texttt{<confidence>} 0.4 \texttt{</confidence>}
}
\end{tcolorbox}

\begin{tcolorbox}[colback=blue!5!white,colframe=blue!70!black,title=\textbf{\rlvf}, fonttitle=\bfseries]
{\small
\texttt{<think>} The question asks for the nationality of Jacques Sernas, the actor in Fugitive in Trieste. Jacques Sernas was an American actor. \texttt{</think>}

\vspace{0.5em}
\texttt{<answer>} American \texttt{</answer>}

\vspace{0.5em}
}
\end{tcolorbox}




\begin{tcolorbox}[colback=orange!5!white,colframe=orange!50!black,title=\textbf{Qwen-7B-Base}, fonttitle=\bfseries]
{\small
\texttt{<think>} To determine Jacques Sernas' nationality, I will analyze the information provided in the supporting information.

From the information given, I can see that Jacques Sernas has appeared in various films from different countries, such as Italy, France, and Spain. However, there is no specific mention of his nationality in the provided information.

Since there is no direct information about Jacques Sernas' nationality, I will analyze the context of the question. The question is about Jacques Sernas' nationality in the context of the film "Fugitive in Trieste." Trieste is a city in Italy, and the film is an Italian romantic drama.

Based on this context, it is reasonable to assume that Jacques Sernas is of Italian nationality. However, without any direct information, I cannot be 100\% confident in this assumption. \texttt{</think>}

\vspace{0.5em}
\texttt{<answer>} Italian \texttt{</answer>}

\vspace{0.5em}
\texttt{<confidence>} 0.8 \texttt{</confidence>}
}
\end{tcolorbox}

\clearpage

\section*{Math Example}
\label{appendix:math-eg}
An example where the analysis from the vanilla \rlcf is generic, while the SFT+\rlcf model produces a much more specific evaluation that directly references the reasoning process. Although the \rlcf model achieves reasonable calibration on Math, the SFT+\rlcf model outperforms it, likely because it has learned to generate more effective uncertainty-aware chain-of-thought reasoning.

\begin{tcolorbox}[colback=black!10!white,colframe=black!60!black,title=\textbf{Question}, fonttitle=\bfseries]
Dalton, Hunter, Alex, Bella, and Chris formed a Superhero Fan Club. They set a goal to watch as many Superhero movies as they could in one summer. Dalton watched 15 movies, Hunter watched 19, Alex watched 25, Bella watched 21, and Chris watched 11. They all watched 5 superhero movies together, while Dalton, Hunter, and Alex watched an additional 3 movies only among themselves, and Bella and Chris watched 2 more movies only with each other. Every other movie they watched was different. How many different movies did they see? (Ground Truth Ans: 63)
\end{tcolorbox}

\vspace{0.5em}

\begin{tcolorbox}[colback=magenta!10!white,colframe=magenta!60!black,title=\textbf{\rlcf (ours)}, fonttitle=\bfseries]
{\small
\texttt{<think>} To determine the total number of different movies watched by Dalton, Hunter, Alex, Bella, and Chris, we need to account for the movies they watched together and the additional movies they watched among themselves or with each other.

First, let's calculate the total number of movies watched if we didn't account for any overlaps:
- Dalton watched 15 movies.
- Hunter watched 19 movies.
- Alex watched 25 movies.
- Bella watched 21 movies.
- Chris watched 11 movies.

The sum of these is:
\[ 15 + 19 + 25 + 21 + 11 = 91 \]

Next, we need to subtract the overlaps. They all watched 5 movies together, so we subtract 4 overlaps (since the 5 movies are counted 5 times in the initial sum):
\[ 91 - 4 \times 5 = 91 - 20 = 71 \]

Now, we need to account for the additional movies watched among specific groups:
- Dalton, Hunter, and Alex watched 3 additional movies only among themselves. These movies are already subtracted in the previous step, so we don't need to subtract them again.
- Bella and Chris watched 2 additional movies only with each other. These movies are also already subtracted in the previous step, so we don't need to subtract them again.

Since all the overlaps have been accounted, the total number of different movies they saw is 71.

\vspace{0.5em}
\texttt{<answer>} 71 \texttt{</answer>}

\vspace{0.5em}
\texttt{<analysis>} The solution provided involves a reasonable approach to accounting for the overlaps in the number of movies watched. However, there is uncertainty in the interpretation of the problem statement, particularly the part about "Every other movie they watched was different."

This phrase could be interpreted in two ways:
1. That the 5 movies watched together, the 3 additional movies watched by Dalton, Hunter, and Alex, and the 2 additional movies watched by Bella and Chris are all distinct from each other and from the other movies watched by the individuals.
2. That the total number of movies watched by each individual, minus the overlaps, results in a set of distinct movies.

Given the information provided, the first interpretation seems more plausible, but it introduces an element of uncertainty. The problem does not clearly state that all movies watched are distinct, only that "Every other movie they watched was different," which could be interpreted as a subset of the total movies watched.

Therefore, the confidence level in the solution is set at 0.6, acknowledging the potential for misinterpretation and the lack of clear information in the problem statement.\texttt{</analysis>}

\vspace{0.5em}
\texttt{<confidence>} 0.6 \texttt{</confidence>}
}
\end{tcolorbox}

\begin{tcolorbox}[colback=pink!5!white, colframe=pink!50!black, title=\textbf{SFT+\rlcf (ours)}, fonttitle=\bfseries]

{\small
\texttt{<think>} To determine the total number of different movies the Superhero Fan Club members watched, we need to account for the movies watched individually and in groups, ensuring we don't double-count any movies.

First, let's sum the total number of movies watched by each member without considering overlaps:
- Dalton watched 15 movies.
- Hunter watched 19 movies.
- Alex watched 25 movies.
- Bella watched 21 movies.
- Chris watched 11 movies.

Adding these gives:
\[ 15 + 19 + 25 + 21 + 11 = 91 \]

Next, we need to subtract the overlaps. They all watched 5 movies together, so we subtract 4 times 5 (since the 5 movies are counted 5 times in the initial sum, once for each person):
\[ 91 - 4 \times 5 = 91 - 20 = 71 \]

Now, we need to account for the additional movies watched in smaller groups. Dalton, Hunter, and Alex watched 3 more movies only among themselves. These 3 movies are already included in the initial counts for Dalton, Hunter, and Alex, but we need to ensure they are not double-counted in the all-together group. Since the 3 movies are only among these three, they are not part of the 5 movies watched together, so we don't need to subtract them again. They are already included in the individual counts and the group of 5, so we add them back to the total after the initial subtraction:
\[ 71 + 3 = 74 \]

Bella and Chris watched 2 more movies only with each other. These 2 movies are already included in the initial counts for Bella and Chris, but we need to ensure they are not double-counted. Since the 2 movies are only between Bella and Chris, they are not part of the 5 movies watched together or the 3 movies watched by Dalton, Hunter, and Alex, so we add them back to the total after the initial subtraction:
\[ 74 + 2 = 76 \]

At this point, we have accounted for all the overlaps, and the total number of different movies they saw is 76. To verify, let's consider the structure of the problem:
- The 5 movies watched together are counted 5 times initially but subtracted 4 times, so they are correct.
- The 3 movies watched by Dalton, Hunter, and Alex are included in their individual counts and the group of 5, so they are correct.
- The 2 movies watched by Bella and Chris are included in their individual counts, so they are correct.

The final count of 76 different movies is consistent with the problem constraints. Therefore, the total number of different movies they saw is 76. \texttt{</think>}

\vspace{0.5em}
\texttt{<answer>} 76 \texttt{</answer>}

\vspace{0.5em}
\texttt{<analysis>} Upon reviewing the solution, I notice that the reasoning process involves summing the total movies watched individually, subtracting the overlaps, and then adding back the group-specific movies. The initial sum of 91 is correct, and the subtraction of 20 for the 5 movies watched together is appropriate, resulting in 71. However, the addition of the 3 movies watched by Dalton, Hunter, and Alex (3) and the 2 movies watched by Bella and Chris (2) is problematic. These movies are already included in the individual counts and the group of 5, so adding them back is redundant and unnecessary. The reasoning does not clearly explain why these movies are added back, and the final total of 76 does not seem to be derived from a clear, step-by-step logical flow. The explanation about the 3 movies being "only among these three" and not part of the 5 is confusing, as the 3 movies are part of the individual counts and the group of 5, so they should not be added back. The 2 movies watched by Bella and Chris are also problematic, as they are only between Bella and Chris, but the reasoning does not clearly address how they are added back without double-counting. The final answer of 76 is not supported by the reasoning, and the steps are convoluted, making the solution unclear and potentially incorrect.\texttt{</analysis>}

\vspace{0.5em}
\texttt{<confidence>} 0.3 \texttt{</confidence>}
}
\end{tcolorbox}

\begin{tcolorbox}[colback=blue!5!white,colframe=blue!65!black,title=\textbf{\rlvf}, fonttitle=\bfseries]
{\small
\texttt{<think>} To determine the total number of different movies watched by Dalton, Hunter, Alex, Bella, and Chris, we need to carefully account for the movies they watched together and the movies they watched exclusively among themselves and with each other.

1. **Total movies watched by each individual:**
- Dalton: 15 movies
- Hunter: 19 movies
- Alex: 25 movies
- Bella: 21 movies
- Chris: 11 movies

2. **Movies watched together:**
They all watched 5 movies together.

3. **Movies watched exclusively among themselves:**
- Dalton, Hunter, and Alex watched an additional 3 movies only among themselves.
- Bella and Chris watched 2 more movies only with each other.

4. **Calculating the total number of movies watched:**
- First, sum the total movies watched by each individual:
\[
15 + 19 + 25 + 21 + 11 = 91
\]
- Subtract the movies watched together (since they are counted multiple times in the individual totals):
\[
91 - 5 = 86
\]
- Add back the movies watched together (they should only be counted once):
\[
86 + 5 = 91
\]
- Subtract the movies watched exclusively among themselves and with each other (since they are also counted multiple times):
- Dalton, Hunter, and Alex watching 3 movies only among themselves are already subtracted when we subtracted the 5 movies watched together.
- Bella and Chris watching 2 movies only with each other are already subtracted when we subtracted the 5 movies watched together.
- Therefore, we do not need to make any further adjustments for these exclusive movies as they are already accounted for in the initial sum and subtraction.

5. **Final calculation:**
The total number of different movies they saw is the sum of all individual movies watched, minus the movies watched together (to avoid double-counting), plus the movies watched together again (to include them once), and no further adjustments are needed for the exclusive movies:
\[
91 - 5 + 5 = 91
\]

Thus, the total number of different movies they saw is:
\[
\boxed{91}
\]
\texttt{</think>}

\vspace{0.5em}
\texttt{<answer>} 91 \texttt{</answer>}

\vspace{0.5em}
\texttt{<confidence>} 90 \texttt{</confidence>}
}
\end{tcolorbox}

\end{document}